\documentclass[10pt,letterpaper]{article}

\usepackage[utf8]{inputenc}

\usepackage{cite}
\usepackage{amssymb}
\usepackage{lipsum}

\usepackage{amsthm}

\usepackage{amssymb,amsmath,amsthm}

\newtheorem{theorem}{Theorem}

\newtheorem{lemma}[theorem]{Lemma}

\usepackage{mathtools} 

\usepackage{nameref,hyperref}


\usepackage{microtype}
\DisableLigatures[f]{encoding = *, family = * }




\usepackage{changepage}


\usepackage{amsfonts}
\usepackage{amsmath}   
\usepackage{url}       
\usepackage{subfigure}
\usepackage{psfrag} 
\usepackage{graphicx}
\makeatletter
\renewcommand{\@biblabel}[1]{\quad#1.}
\makeatother

\usepackage{lastpage,fancyhdr,graphicx}
\usepackage{epstopdf}
\pagestyle{myheadings}
\pagestyle{fancy}
\fancyhf{}
\rfoot{\thepage/\pageref{LastPage}}

\usepackage{color}

\definecolor{Gray}{gray}{.25}

\usepackage{graphicx}

\usepackage{sidecap}

\usepackage{wrapfig}
\usepackage[pscoord]{eso-pic}
\usepackage[fulladjust]{marginnote}
\reversemarginpar

\begin{document}
\vspace*{0.35in}
\newtheoremstyle{mythm}{3pt}{3pt}{}{}{\itshape}{:}{.5em}{}
\theoremstyle{mythm}
\begin{flushleft}
{\Large
\textbf\newline{Glucose values prediction five years ahead with a new framework of missing responses in reproducing kernel Hilbert spaces,  and the use of continuous glucose monitoring technology}
}
\newline
\\
Marcos Matabuena\textsuperscript{1,*},
Paulo Félix\textsuperscript{1},
Carlos Meijide-Garcia\textsuperscript{2},
Francisco Gude\textsuperscript{3}
\\
\bigskip
\bf{1} CiTIUS (Centro Singular de Investigaci\'{o}n en Tecnolox\'{i}as Intelixentes), Universidade de Santiago de Compostela, Spain
\\
\bf{2} Universidade de Santiago de Compostela, Spain
\\
\bf{3} Unidade de Epidemiolox\'{i}a Cl\'{i}nica, Hospital Cl\'{i}nico Universitario de  Santiago de Compostela, Spain
\\
\bigskip
*\url{marcos.matabuena@usc.es}

\end{flushleft}

\section*{Abstract}
%

 AEGIS study possesses unique information on longitudinal changes in circulating glucose through continuous glucose monitoring technology (CGM). However, as usual in longitudinal medical studies, there is a significant amount of missing data in the outcome variables.  For example, 40 percent of glycosylated hemoglobin (A1C) biomarker data are missing five years ahead. With the purpose to reduce the impact of this issue, this article proposes a new data analysis framework based on learning in reproducing kernel Hilbert spaces (RKHS) with missing responses that allows to capture non-linear relations between variable studies in different supervised modeling tasks. First, we extend the Hilbert-Schmidt dependence measure to test statistical independence in this context introducing a new bootstrap procedure, for which we prove consistency. Next, we adapt or use existing models of variable selection, regression, and conformal inference to obtain new clinical findings about glucose changes five years ahead with the AEGIS data. The most relevant findings are summarized below: i) We identify new factors associated with long-term glucose evolution; ii) We show the clinical sensibility of CGM data to detect changes in glucose metabolism; iii)  We can improve clinical interventions based on our algorithms' expected glucose changes according to patients' baseline characteristics.

\section*{Motivation and outline contributions}

With advances in digital patient monitoring and personalized medicine, a new clinical paradigm based on optimizing medical decisions according to data-driven approaches, is emerging. \textit{Diabetes mellitus} is an essential reference point to the application of these techniques. It is estimated that approximately 50\% of diabetes patients have not been diagnosed yet. Furthermore, adherence and effectiveness of treatments are poor across many patient groups; and disease prevalence is increasing with contemporary lifestyles. In this sense, predictive models that forecast and identify risk factors associated with the evolution of glycemic profiles in the short and long term is vital for identifying patients at risk of disease development, improving early diagnosis, and prescribing optimal dynamic treatments. This paper’s primary goal is to study the relationship between the AEGIS study patients’ baseline characteristics and the primary biomarker of diabetes diagnosis and control- glycosylated hemoglobin (A1C)- five years ahead. In addition, we introduce information about continuous glucose monitoring (CGM) into the models to capture individual glucose homeostasis fluctuations at a high-resolution level. As five-year A1C data registries are missing for approximately 40\% of patients, we propose a new data-analysis framework based on RKHS learning with missing responses as a methodological contribution. This machine learning (ML) paradigm allows to detect complex non-linear relations between study variables and analyzes simultaneous data of different nature from several information sources such as CGM. In particular, we address the statistical independence testing problem via a new Hilbert-Schmidt criterium designed explicitly for this context, and we do several adaptions of existing model-free methods of variable selection,  regression models, and conformal inference algorithms. Using these models, we achieve new clinical findings: i) We identify some diabetes biomarkers associated with glucose variations in the standard clinical routine, both marginally and from a multivariate perspective, ii) We show the need to incorporate CGM technology to predict glucose changes in the long term, iii) We identify some risk patients’ phenotypes for which the model’s predictive capacity is moderate, and therefore more personalized follow-up is needed by them.



\section{Introduction}

Diabetes mellitus is one of the most critical public health problems being the ninth major cause of death of mortality worldwide \cite{zheng2018global, saeedi2020mortality}. At present, over $416$ and $47$ million patients have Type II and Type I diabetes respectively \cite{saeedi2019global} with estimated health costs of disease management that reach $760$ billion dollars \cite{williams2020global}. Moreover, several projections forecast a significant increase in prevalence in the following decades \cite{whiting2011idf, cho2018idf}. Considering the growth of this pandemic among the general population \cite{tabish2007diabetes, hu2015curbing,ginter2013type}, the need to pursue new health politics to enable early recognition of risk patients and improvement in the methodology of disease diagnosis in the standard clinical routine is noteworthy. Nowadays, around 50\% of patients with diabetes are undiagnosed \cite{saeedi2019global}, and the proliferation of sedentary lifestyles is more generalized between the population \cite{finkelstein2012obesity} being a significant causal factor \cite{ng2014global} of the progressive increase in the incidence of chronic diseases \cite{visscher2001public, zheng2018global}, or that the density curve of body mass index along different age-groups is taking higher and higher values in the over-height and obesity range \cite{flegal2012prevalence}. As a consequence, clinical complications and burden of health costs associated \cite{rubin1994health} with the impaired glycemic condition in the early stages of the disease in patients to whom no specific glycemic individual glucose homeostasis control interventions are performed  \cite{walker2010diet} will have a stronger impact on human condition. \cite{zheng2018global, dabelea2017association}
	
A new emerging clinical  paradigm  based on digital and precision medicine \cite{topol2010transforming, kosorok2019precision, schork2015personalized} can be a landmark to improve early diagnosis. In this context, clinical decisions, e.g., treatment prescription, can be optimized through the intensive use of statistical models and machine learning techniques \cite{kosorok2015adaptive,kosorok2019precision, zhao2011reinforcement, coronato2020reinforcement, cirillo2019big}, that exploit the rich source of information generated by patients’ monitoring \cite{li2017digital}.

In the particular case of diabetes  \cite{ellahham2020artificial, gunasekeran2020artificial, zou2018predicting}, the application of these models can be a valuable weapon to an improvement in the early identification of patients with a high risk of developing diabetes, the prediction of complications such as retinopathy, as well as dynamical prescriptions of optimal treatments\cite{tsiatis2019dynamic, zhang2012estimating, goldberg2012q} in patients with type I diabetes. In this case, the patient variables involved in the individual data-driven treatments routine \cite{doi:10.1080/01621459.2018.1537919} can include non-pharmacological interventions such as physical exercise or diet, insulin pumps, or other common drugs such as metformin \cite{walker2010diet, centers2011national}.

The current advances in device technology allow assessing patients’ glucose metabolism at a high-resolution level, capturing the individual differences in the glucose fluctuations at different time scales via continuous glucose monitoring (CGM) \cite{Zaccardi2018}. However, current standard clinical biomarkers of diabetes diagnosis and control such as glycosylated hemoglobin (A1C) or fasting plasma glucose (FPG) \cite{zhang2010a1c} capture only partially the temporal complexity of the glycaemic profiles, measuring summary characteristics as the mean glucose over the precedent 3-month (A1C) or a glucose value in a specific instant of time selected in the morning (FPG) \cite{selvin2007short}. CGM device has been used primarily in specific  risk managing situations concerning  patients with type I diabetes \cite{poolsup2013systematic}. However, their use is more popular in clinical routines because of decreased costs and technological improvement. As a consequence, more general applications  in both disease and healthy populations  are emerging even outside the field of diabetes \cite{Ludc201862}. Some relevant examples include the acquisition of new clinical knowledge in epidemiological studies, the screening of patients, the evaluation of the prognosis of patients with diabetes, and the optimization of the diet \cite{beck2019validation,zeevi2015personalized}. Nevertheless, from the methodological point of view, there are essential difficulties in exploiting the data generated by these devices in realistic environments where patients are monitored in free-living conditions, and glucose fluctuations are not temporally aligned between patients, being time series analysis unfeasible. In a recent work, \cite{matabuena2020glucodensities} introduced new functional profile of CGM data termed \textit{glucodensity} to overcome these limitations, (see Figure \ref{fig:graf1} for intuitive explanation about the new data analysis method). The results show that this alternative method can assess glucose homeostasis more accurately than the so-called time in range metrics, the current gold standard in CGM data handling \cite{battelino2019clinical,beck2019validation, wilmot2020time,nguyen2020review}. Moreover, this new proposal's application overcame some limitations of time in range metrics such as predefinition of target zones- which can depend on the study population- and the loss of information caused by a discretization in different intervals of the recorded information.

\begin{figure}[ht!]
	\centering
	\includegraphics[width=0.9\linewidth]{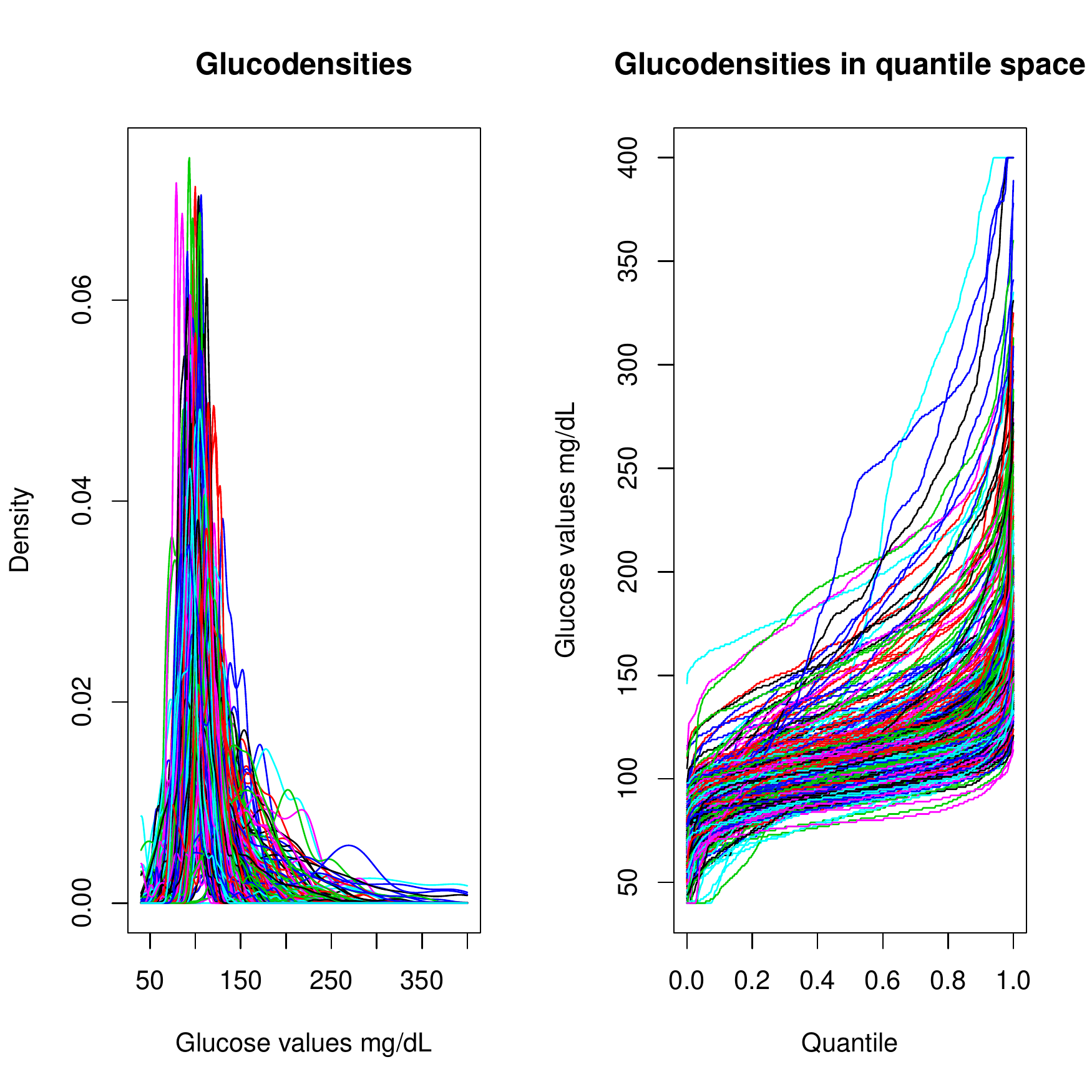}
	\caption{Glucodensities are estimated from a random sample of the AEGIS study with diabetic and normoglycemic patients. Our glucose representation estimates the proportion of time spent by a patient at each glucose concentration over a continuum. This represents a more sophisticated approach to assess glucose metabolism.  In the figure at the right, the representation of the glucodensities in the space of quantile functions is shown.}
	\label{fig:graf1}
\end{figure}

Several predictive models have been developed in the present body of scientific literature to control and diagnose diabetes \cite{edlitz2020prediction,zaitcev2020deep,lee2013prediction}. However, to the best of our knowledge, these models present several limitations as they do not incorporate the rich information about individual glucose homeostasis dynamics provided by CGM. Moreover, authors often fit the predictive models with data of observational nature without applying specific techniques to correct non-randomness in the sampling design, affecting generalization and inference in the predictive models. We believe that CGM technology can introduce new insight into the assessment of future glucose metabolism behavior. In addition, the use of high-quality data such as those obtained from a random sample of the general population is essential to obtain robust and reproducible conclusions about model performance. This study design, and not dealing with observational data, is the gold-standard practice to assess treatments' performance with safety in other domains such as clinical trials.

The AEGIS population-based study \cite{gude2017glycemic} is one of the most representative cohorts in the world that analyzes unique clinical characteristics about glucose dynamics over ten years, with a random sample of 1516 individuals from A Estrada (Galicia, Spain). At the beginning of this study, 581 participants were randomly selected because of the economical and logistical implications of wearing a CGM device for $3$-$7$ days. After a 5-year follow-up, a significant fraction of those individuals did not agree to perform a second monitoring, while some 5-year relevant outcomes such as A1C could not be measured in 40\% of the patients. This situation is commonplace in cohort studies, where the presence of missing data in different outcomes with a lack of follow-up of patients is familiar \cite{laird1988missing, tsiatis2007semiparametric}. Generally, in the literature of missing data when the outcome or response is missing, two different situations are considered \cite{tsiatis2007semiparametric,rubin1976inference,little2019statistical}. In the first case, MAR (\textit{Missing At Random}) assumption ensures that the mechanism of missing data is random and independent of covariates. In the second case, MNAR (\textit{Missing Not At Random}) hypothesis assumes that some covariates have a certain impact on the mechanism of missing data; for instance, in our example, older patients are less susceptible to perform a second CGM monitoring so that the probability of not observing a patient increase with age.

The aim of this paper is twofold. First, we will introduce new Machine Learning (ML) models to test statistical independence, perform variable selection, and predict and make inferences in a context where the response is missing in some individuals, something rather usual in the outcome variables AEGIS study. The proposed models are based on a Reproducible Kernel Hilbert Space (RKHS) learning paradigm, one of the most powerful machine learning strategies \cite{scholkopf2002learning, steinwart2008support}, that has been successfully applied in many real applications because of its ability to detect complex non-linear relations between study variables, see for example \cite{ivanciuc2007applications,noble2006support,salcedo2014support,deka2014support, muandet2016kernel, ghorbani2020neural}. Second, we apply the developed models to the AEGIS database, composed patients with and without diabetes. As a result, we identify markers associated with A1C-measured glucose values five years and predict future glucose values to acquire future changes in patient conditions.

\subsection{Predictive models of glucose evolution}

With the aim of stratification patients risk, different diabetes scores as the Finnish (FINDRISC) \cite{makrilakis2011validation} and the German (GDRS) \cite{muhlenbruch2018derivation} ones try to predict the probability of developing diabetes in ten years time with a logistic regression or the time to becoming a diabetic person with survival models such as Cox regression. The variables included in the models are easily obtained, such as age, sex, anthropometric measurements, or other clinical history information such as lifestyle, family history, and medication. More recently, Yochai Edlitz and Eran Segal \cite{edlitz2020prediction} propose several alternative ML models based on boosting gradient machine algorithms that involve different variables of greater or lesser complexity of measurement, such as the measures discussed above or laboratory biochemical biomarkers, or even genotyping variables. However, the UK Biobank sample is observational, and the authors do not use specific techniques to avoid the biases associated with the sampling mechanism, which limits generalization and reproducibility of the obtained results. Other articles present in the literature predict continuous biomarkers of diagnosis and control of diabetes such as FPG or A1C in both the non-diabetic and diabetic patients instead of predicting the event “development of diabetes” or another categorical variable \cite{zaitcev2020deep,lee2013prediction}.

We consider that the latter approach has considerable advantages respect the first mainly for two reasons: i) In terms of statistical inference and interpretation of the results, it is more precise  to predict a continuous variable rather than a categorized one, in which there is an evident loss of information; ii) from the clinical point of view, we lose the information of each individual’s glucose values with the categorization of the variables. Suppose we do not categorize the variables and use the biomarkers’ value as a continuous variable. In that case, we can analyze a target sample involving both patients without diabetes and patients with diabetes, as with the AEGIS study database.

In this sense, some authors have recently affirmed that the current geocentric definition of diabetes may be strict in various situations \cite{VAS2017848}. Using CGM technology with multitudes of measurements of an individual’s glucose metabolism may lead to the establishment of more personalized diagnostic thresholds \cite{Zaccardi2018}.

Based on the discussed reasons, this study’s clinical goal is to predict A1C five years ahead using information provided by a continuous glucose monitoring device, what has never been explored in the literature. We select A1C as the outcome instead of FPG because A1C presented a much more reproducible laboratory biochemical measurement between different testing than FPG, and the response variable is subject to less measurement error \cite{selvin2007short}.

\subsection{RKHS models with missing data}

In the last decades, the statistical community has developed a vast amount of new methodology contributions to minimize the bias caused by missing data in covariates and response variable in various unsupervised and supervised modeling tasks \cite{londschien2020change, little2019statistical}. The contributions carried out under the paradigm of statistical learning or ML are sparser and more recent than in the field of Statistics; see, for example \cite{pmlr-v119-muzellec20a}.

In this paper, we restrict our attention to the case that the only variable with missing entries is the response. This situation is common in many longitudinal studies where the lack of data retrieval in some patients’ follow-up is frequent, and there is no information about the evolution of several outcomes related to them in different periods. 


We chose the RKHS learning paradigm to tackle our missing data problem. The primary rationale for this decision is that estimators have the optimal non-parametric convergence rate \cite{stone1982optimal} under certain hypotheses. For example, when data live in a low-dimensional manifold or a very smooth functional space. However, methods remain valid with heterogeneous complex data \cite{borgwardt2006integrating, muandet2016kernel} as graphs or curves that take values on a continuum, as in our case concerning the distribution-functional representation of glucose profiles built through the concept of glucodensity \cite{matabuena2020glucodensities}.

There is available methodology in this context only on predictive models for performing regression to the best of our knowledge. We introduce new methods for statistical independence testing, variable selection, and inference on the uncertainty of new predictions. Liu and Goldberg \cite{liu2020kernel} proposes a Kernel Ridge Regression estimator for either the case in which the missing data mechanism is handled by propensity score via IPW (\textit{Inverse Probability Weighting}) estimator or a double-robust approach \cite{tsiatis2007semiparametric, bang2005doubly}. Next, we summarize the state-of-the-art of RKHS-based techniques with complete data. Kernel means embedding \cite{muandet2016kernel, gretton2012kernel} is a powerful tool to build different statistics to contrast equality between probability distributions, measure statistical independence in RKHS spaces, and capture a broad interest between practitioners of the ML community and, in particular, of kernel methods. Variable Selection in RKHS spaces allows identifying the best subset of predictors without assuming any functional form underlying the covariates' dependence structure and the predictor variables. Two different strategies were considered: i) Learning the gradient of the conditional mean function  \cite{yang2016model}; ii) maximize the norm of the conditional correlation operator \cite{chen2017kernel}. 

Finally, measuring the uncertainty of predictions is an essential task to support clinical decision-making through these models. The previous issue can be done, for example, through the construction of a confidence interval that contains the real value with a certain probability margin. In a set up of complete data, this problem can be addressed with conformal inference \cite{shafer2008tutorial}. At this moment, no methodology is available with missing data. However, we exploit recent advances on causal inference in combination with this area \cite{lei2020conformal}, and we adapt our methodology using the connection between these two areas \cite{ding2018causal}.

\subsection{Summary of the results}\label{sec:resumenresults}

Let $\{(X_i, Y_i, R_i)\}_{i=1}^{n}$ be a independent random sample of a random vector $(X, Y, R)$ taking values in $V\times \mathbb{R}\times \{0,1\}$, where $V$ can denote any set as general as we want, for example, graphs or random functions. Let $X$ denote the covariates, $Y$ the response variable, and $R$ a binary random variable that indicates whether the response is missing or not, which we assume to be distributed according to the probability law $\pi (x)= \mathbb{P}(R=1|X=x)$, which depends on the covariates $X$. For this, we suppose that $\quad R  \perp Y| X$. In MAR missing data mechanism, $R\sim Bernoulli(p)$, where $p$ is the probability that the event "missing response datum" happens, and this event is independent of the values that the covariates take.

Therefore, we propose a new data analysis framework in different prediction tasks when some  $Y_i’s$ are not observed. In particular, $Y_i$  is missing  if $R_i=0$ $(i=1,\ldots, n)$.  Below, we discuss the modeling tasks that we address together with new different methodological contributions.

\begin{itemize}
	\item \underline{Independence statistical testing}:  A cornerstone problem in statistics, epidemiology, and in a general setting of data analysis is testing statistical independence between random variables X and Y. In this case; we carried on a hypothesis test using the sample $\{(X_i,Y_i)\}_{i=1}^{n}$, and we check if there exist any evidences that $X$ and $Y$ are not independent,  i.e., we can reject or not a null hypothesis $H_0: \quad X  \perp Y$ according to whether the statistic value belongs to the rejection region. To do this, we must calibrate the test under the null hypothesis to determine what results are expected to happen with a certain probability if the null hypothesis holds. In our concrete case, we have to take into account the effects of the mechanism of missing data in the response variable $Y$ in order to the design the test and to calibrate the null distribution, which is determined by the behavior of the previous function $\pi(\cdot)$. In the first case, we propose a methodology to deal with this problem based on kernel mean embeddings, which is valid when the covariates vector and response live in a separable Hilbert space. In addition, we introduce a new bootstrap procedure to perform test calibration, adapted to kernel mean embeddings.
	

	\item \underline{Variable selection}: Consider the mean regression problem: 
	\begin{equation}\label{eqn:1}
	Y= m(X)+\epsilon,	
	\end{equation}
	
where $\epsilon$ is a random error of mean zero and $X= (X^{1},\cdots, X^{p})$ is a random vector composed of $p$ covariates. In many problems, it is important to identify the subset of covariates $I\subset \{X^{1},\cdots, X^{p}\}$ that has an impact on the prediction $Y$. The previous problem is remarkable primarily because of the next two factors: i) to achieve parsimonious predictive models that generalize well with the new cases; ii) to discover the genuine causes associated with diseases or patient prognosis. We will modify the algorithm of Lei Yang, Shaogao Lv, Junhui Wang, so it remains valid when the response is missing \cite{yang2016model}.

	\item \underline{Prediction and inference}: The ultimate goal of any predictive task is always to explain the relationship between the variables $Y$ and $X$, for example, according to the model defined in equation \ref{eqn:1}. Here, with real-world data, $m(\cdot)$, can have any functional shape, although it is also common to restrict it to a parametric, semi-parametric form (e.g., additive structure), or assume that the conditional mean function lives in a smooth function space. Furthermore, it is crucial to measure the predictions’ uncertainty and give a region of probability containing the real value with an appropriate level of confidence. An appropriate level can be $90\%$ to control and secure the reliability of the results returned by the algorithm with a substantial margin of probability. In the prediction task, we will use the kernel ridge regression model proposed by Liu and Goldberg \cite{liu2020kernel}. However, using the theory of linear regression, we will calculate the leave-one-out cross-validation regularization parameter efficiently and take into account the missing data mechanism. It is important to note that this class of models’ regularization parameters largely determine the model performance, as evinced in some relevant recent papers \cite{liang2020just, hastie2019surprises, bartlett2020benign}. Additionally, using advances in conformal inference recently exploited in causal inference \cite{lei2020conformal}, we will obtain regions that have good finite sample coverage.
\end{itemize}

As for the glucose prediction clinical study case, our main contributions are the following:

\begin{itemize}
	
\item We identify several markers associated with the evolution of glucose five years ahead. 


\item With the aim to optimize medical decisions, we provide a predictive algorithm to forecast expected A1C values five years ahead. We use as covariates individual’s glycemic status and other clinical variables.

\item We interpret and discuss the residuals and the predictive capacity of the different models from the clinical point of view, providing interpretable clinical phenotypes in which future forecasts will have a large uncertainty.
	
\end{itemize}

\section{Missing data models}

\subsection{Testing statistical independence}\label{sec:independence}

Kernel mean embeddings \cite{gretton2007kernel, muandet2016kernel}, or the equivalent \textit{distance correlation} in the statistical community  \cite{szekely2007measuring,szekely2017energy,sejdinovic2013equivalence} are among the most extensive and general methodologies in the case of complete data to test statistical independence. Subsequently, let us introduce some elementary background over the previous distances/transformations before explaining the extension that the response variable can be missing.

Consider an arbitrary $H_{X}$ $RKHS$ associated with the random variable $X$,  which is uniquely determined by a positive definite symmetric kernel $K_{X}: V\times V\to \mathbb{R}^{+}$- with $V$ an arbitrary set, that satisfies the following two conditions: i) $K_{X}(\cdot,x)\in H_{X}$, ii) $<f,K_{X}(\cdot,x)>= f(x)$ $ \forall f\in H_{X}$.  Given a $V$-valued random variable $X$ with probability measure $P_{X}$, the kernel mean embedding of $X$ is defined as the function  $\phi_{X}: s \in V \mapsto \int_{V\times V}^{} K_{X}(s,x) dP(dx)=E_{X\sim P_{X}  }(K_{X}(\cdot, X))\in H_{X}$. Roughly speaking,  $\phi_{X}(\cdot)$ embeds the data in a new separable Hilbert space that is  typically infinite dimensional. In the following, we suppose that all used kernels  $K(\cdot,\cdot)$  are \textit{characteristic}, an important property that guarantees the ability to characterize independence or equality in distribution with this methodology against all alternatives. Concretely,  we say that $K_{X}(\cdot,\cdot)$ is a characteristic kernel if $\phi_{X}(\cdot)$ is injective \cite{simon2018kernel}.\\

Now, let $Y$ be another random variable that, for the sake of simplicity, we will assume to take values in $\mathbb{R}$ as in the problem definition above, which we will interpret as the \textit{response variable}. By definition, testing the null hypothesis $H_{null}: X \perp Y$ is equivalent to testing $H_{null}: P_{X, Y}= P_{X}P_{Y}$, that is, the  distribution probability measure  is expressed as a product of marginals measures. With this aim in mind, we introduce some extra notation. We denote by $\phi_{Y}(\cdot)$, $\phi_{X, Y}(\cdot)$ the kernel mean embeddings of $Y$ and the bivariate random variable $(X,Y)$ that depends on kernel $K_{(X,Y)}$. We must note, firstly, that $X$ and $Y$ can have different dimensions. In addition, it is natural to consider in  $V\times \mathbb{R}$ the RKHS space $H_{X}\otimes H_{Y}$, where it makes sense to define the global kernel as $K_{(X,Y)}(x,y)(x’,y’)= K_{X}(x,x’)K_{Y}(y,y’)$ $\forall (x,x’)\in V\times V$ and $(y,y’)\in \mathbb{R} \times \mathbb{R}$ and $\otimes$ denote tensor product. Then, a natural way of testing independence is measuring the distance between the functions  $\phi_{X, Y}(\cdot)$ and  $\phi_{Y}(\cdot) \otimes \phi_{X}(\cdot)$. More specifically, we define the Hilbert-Schmidt independence criterion (HSIC) between $P_{X, Y}$ and $P_{X}P_{Y}$ as

\begin{equation}\label{eqn:HSIC}
HSIC(P_{X, Y},P_{X}P_{Y})= ||\phi_{X, Y}- \phi_{X} \otimes \phi_{Y}||^{2}_{H_{X}\otimes H_{Y}}
\end{equation}

Expanding Equation \ref{eqn:HSIC} we have

\begin{equation} \label{eqn:HSIC1}
\begin{split}
& HSIC(P_{X, Y},P_{X}P_{Y}) = \\
& = <\phi_{X, Y}- \phi_{X} \otimes \phi_{Y},\phi_{X, Y}- \phi_{X} \otimes \phi_{Y} >_{H_{X}\otimes H_{Y}}\\ &=<\phi_{X, Y}, \phi_{X, Y}>_{H_{X}\otimes H_{Y}}+<\phi_{X} \otimes \phi_{Y},\phi_{X} \otimes \phi_{Y}>_{H_{X}\otimes H_{Y}}\\&-2<\phi_{X, Y},\phi_{X} \otimes \phi_{Y}>_{H_{X}\otimes H_{Y}}.
\end{split}
\end{equation}

and using properties of $RKHS$ and Fubini's theorem, we get 

\begin{equation}\label{eqn:HSIC2}
\begin{split}
& HSIC(P_{X, Y},P_{X}P_{Y}) = \\
& = E_{(X,Y,X^{’}, Y^{’})}(K_{X}(X, X’)K_{Y}(Y, Y’))\\
& + E_{(X,X^{’})}(K_{X}(X, X’)) E_{(Y,Y^{’})}(K_{Y}(Y, Y’))\\
& -2E_{(X,Y)}(E_{X’}(K_{X}(X,X’))E_{Y’}(K_{Y}(Y,Y’) )).
\end{split}
\end{equation}

Here $X',Y'$ are iid. copies of random variables $X,Y$.

To understand the HSIC procedure well, it is essential to remark that this procedure consists only of calculating the squared distance between two mean functions in the appropriate RKHS space, which was transformed into original data to capture all distributional differences between the involved random variables.

In practice, only a sample $\{(X_i,Y_i)\}_{i=1}^{n}$ is observed. Therefore, we must replace the population mean by sample mean defined through its empirical distribution. Then, the empirical estimator of the Hilbert-Schmidt independence criterion estimator is given  by

\begin{equation}\label{eqn:HSIC3}
\begin{split}
& \widehat{HSIC}(\hat{P}_{X, Y},\hat{P}_{X}\hat{P}_{Y})= \\
& = \frac{1}{n^{2}} \sum_{i=1}^{n} \sum_{j=1}^{n} (K_{X}(X_i, X_j)K_{Y}(Y_i, Y_j))\\
& +\frac{1}{n^{2}} \sum_{i=1}^{n} \sum_{j=1}^{n} K_{X}(X_i, X_j) \sum_{i=1}^{n} \frac{1}{n^{2}} \sum_{i=1}^{n} \sum_{j=1}^{n} K_{Y}(Y_i, Y_j)\\
&-\frac{1}{n^{3}} \sum_{i=1}^{n}\sum_{j=1}^{n}\sum_{k=1}^{n} K_{X}(X_i,X_j)K_{Y}(Y_i,Y_k).
\end{split}
\end{equation}
With MNAR data, we observe $\{(X_i, Y_i, R_i)\}_{i=1}^{n}$ and we have to estimate the missing data mechanism, that is given by the function $\pi(\cdot)= \mathbb{P}(R=1|X=\cdot)$. Several procedures were proposed in the literature for this aim such as logistic regression, lasso, random forest, or a model ensemble as Super Learner among others \cite{van2007super}. Afterwards, we re-weight the dataset, taking into account how difficult it is to observe the response of the $i$th datum. In particular, we define the weight associated with the $i$th datum $w_i$ via inverse probability weighting (IPW) estimator, as

\begin{equation}\label{eqn:pesos}
w_i= \frac{R_i }{n \pi(X_i)} \hspace{0.4cm} (i=1,\cdots,n).
\end{equation}
We define the normalized-weight of $w_i$, as
\begin{equation}\label{eqn:pesos2}
w_i^{*}= \frac{w_i}{\sum_{i=1}^{n} w_i} \hspace{0.4cm} (i=1,\cdots,n).
\end{equation}

We denoted the estimated $i$th-weigh as $\hat{w}_i$ and $\hat{w}^{*}_{i}$ respectively, after estimate $\hat{\pi}(\cdot)$.

To get an estimator of HSIC with missing data, it is enough to replace the uniform weight $\frac{1}{n}$ of the empirical distribution with the normalized weights $\hat{W}^{*}=(\hat{w}_1^{*},\cdots,\hat{w}_n^{*})$ in the Equation \ref{eqn:HSIC3}. Concretely, we have,
\begin{equation}\label{eqn:HSIC4}
\begin{split}
&\widehat{HSIC}(\hat{P}_{X, Y},\hat{P}_{X}\hat{P}_{Y})= \\
&\sum_{i=1}^{n} \sum_{j=1}^{n} \hat{w}^{*}_{i} \hat{w}^{*}_{j} (K_{X}(X_i, X_j)K_{Y}(Y_i, Y_j))\\
&+\sum_{i=1}^{n} \sum_{j=1}^{n} \hat{w}^{*}_{i} \hat{w}^{*}_{j} K_{X}(X_i, X_j) \sum_{i=1}^{n} \sum_{j=1}^{n} \hat{w}^{*}_{i} \hat{w}^{*}_{j} K_{Y}(Y_i, Y_j)\\
&-\sum_{i=1}^{n}\sum_{j=1}^{n}\sum_{k=1}^{n} \hat{w}^{*}_{i} \hat{w}^{*}_{j} \hat{w}^{*}_{k} K_{X}(X_i,X_j)K_{Y}(Y_i,Y_k).
\end{split}
\end{equation}

Calibration under the null hypothesis with the precedent statistic is not trivial, and the permutation approach is generally not valid. We propose a bootstrap approach for this case based on simple Efron's bootstrap \cite{efron1994introduction}, which remains valid with glucodensities: several bootstrap procedures cannot deal with complex constrained distributional objects that do not live in vector spaces.

Under the null hyphotesis $H_{null}: P_{X,Y}= P_{X}P_{Y}$, then $\phi_{X, Y}(\cdot)- \phi_{Y}(\cdot)\otimes \phi_{X}(\cdot)=0(\cdot)$. So,
\begin{equation}\label{eqn:bootstrap1}
\begin{split}
&\widehat{HSIC}(\hat{P}_{X, Y},\hat{P}_{X}\hat{P}_{Y})= \\ 
&< \hat{\phi}_{X, Y}-\hat{\phi}_{Y} \otimes \hat{\phi}_{X}, \hat{\phi}_{X, Y}-\hat{\phi}_{Y} \otimes\hat{\phi}_{X}>_{H_{X}\otimes H_{Y}}= \\
&< \hat{\phi}_{X, Y}- \phi_{X, Y} + \phi_{Y} \otimes \phi_{X} - \hat{\phi}_{Y} \otimes \hat{\phi}_{X}, \\
&\hat{\phi}_{X, Y}- \phi_{X, Y} + \phi_{Y} \otimes \phi_{X} - \hat{\phi}_{Y} \otimes \hat{\phi}_{X}>_{H_{X}\otimes H_{Y}}.\\ 
\end{split}
\end{equation}
Then, a natural bootstrap procedure that allows estimating the $p$-value of the independence testing problem can be as follows:

\begin{enumerate}
	
	\item Select randomly with replacement and equal probability $n$ elements from the original sample $D=\{(X_i,Y_i, R_i)\}_{i=1}^{n}$ $m$ times. We denote by $D^{j^{*}}=\{(X^{j^{*}}_i,Y^{j^{*}}_i, R^{j^{*}}_i)\}_{i=1}^{n}$ $(j=1,\cdots m)$, the $j$th random sample obtained.

	\item Calculate $\widehat{HSIC^{j^{*}}}(\hat{P}_{X, Y},\hat{P}_{X}\hat{P}_{Y})$ as

\begin{equation}\label{eqn:bootstrap2}
\begin{split}
	&\widehat{HSIC^{j^{*}}}(\hat{P}_{X, Y},\hat{P}_{X}\hat{P}_{Y})=\\	& < \hat{\phi}_{X, Y}- \hat{\phi}^{j^{*} }_{X, Y} + \hat{\phi}^{j^{*}}_{Y} \otimes \hat{\phi}^{j^{*}}_{X} - \hat{\phi}_{Y} \otimes \hat{\phi}_{X}, \\
	&\hat{\phi}_{X, Y}- \hat{\phi}^{j^{*} }_{X, Y} + \hat{\phi}^{j^{*}}_{Y} \otimes \hat{\phi}^{j^{*}}_{X} - \hat{\phi}_{Y} \otimes \hat{\phi}_{X}(\cdot)>_{H_{X}\otimes H_{Y}} \\	& (j=1,\cdots m),
	\end{split}
\end{equation}

	where $\hat{\phi}^{j^{*} }_{X, Y}(\cdot)$, $\hat{\phi}^{j^{*}}_{X}(\cdot)$ and $\hat{\phi}^{j^{*}}_{X}(\cdot)$ are the kernel mean embedding estimated with the bootstrap-sample $D^{j^{*}}=\{(X^{j^{*}}_i,Y^{j^{*}}_i, R^{j^{*}}_i)\}_{i=1}^{n}$.

	\item Estimate the $p-$value as

\begin{equation}\label{eqn:bootstrap3}
\begin{split}
    	&\text{p-value}=\\ 
	&\frac{1}{m} \sum_{j=1}^{m}I( \widehat{HSIC^{j^{*}}}(\hat{P}_{X, Y},\hat{P}_{X}\hat{P}_{Y}) \geq   \widehat{HSIC}(\hat{P}_{X, Y},\hat{P}_{X}\hat{P}_{Y})).
	\end{split}
\end{equation}
\end{enumerate}

Using some standard tools of empirical process theory \cite{van1996weak, van2000applications} , we can establish the bootstrap's consistency with missing data in this framework. 	We introduce specif details in the Appendix, Section \ref{sec:apendice}.



\subsection{Variable selection}\label{sec:selection}

Learning in RKHS allows to detect complex dependence relations between $X= (X^{1},\cdots,X^{p})$ and $Y$. In particular, given the regression model defined in the Equation \ref{eqn:1}, we can select the influential truth variables using the methods-provided in this framework in spite of the fact that we do not have prior information about the functional form of $m(\cdot)$ as in most modern applications.

Suppose that $m(\cdot)$ is a differentiable function and let $g(x)= \nabla m(x)= (\frac{\partial m}{\partial x^{1}}(x),\cdots, \frac{\partial m}{\partial x^{p}}(x))$ be its gradient avaliated in the point $x\in V$. We know that $x^{i}$ in Equation \ref{eqn:1} is an irrelevant predictor if $\frac{\partial m}{\partial x^{i}}(x)=0$ $\forall x\in V$, namely, if the partial derivative along direction $i$ is null then the $i$th covariate does not provide information about the geometry of the prediction function $m(\cdot)$. Then, a naïve approach to select the best subset of variables non parametrically is to learn the gradient and use the value of the norm of that estimation as a criterium. We note that there exists vast research on finding the best subset of covariates in the setup of linear regression model, where currently is an active research area, see for example this following pieces of contemporary work \cite{hazimeh2018fast,bertsimas2016best}.

In a small neighbourhood of $x$, $A_{x}$, we know in virtue of Taylor's theorem that $Y(x)= m(x)+\epsilon \approx m(z)+g(z)^{T}(x-z)+\epsilon$ $\forall z\in A_{x}$. As $m(z)+\epsilon= Y(z)$, then $Y(x) \approx Y(z)+g(z)^{T}(x-z)$ or $Y(x)-Y(z)\approx g(z)^{T}(x-z)$ what implies $E(Y(x)-Y(z))=g(z)^{T}(x-z)$ $\forall z\in A_{x}$ if the volume of $A_x$ is small enough. Then, the  squared error of the true gradient $g(\cdot)$ function can be constructed:

\begin{multline}\label{eqn:sel1}
 \int_{(V\times \mathbb{R})(V\times \mathbb{R})}^{} \omega(x,z)(y-v-g(z)(x-z))^{2}\cdot\\
 \cdot dP_{X,Y}(dx,dy)dP_{X’,Y’}(dz,dv)
\end{multline}

where $\omega(x,z)$ is an appropriate weight function that is zero when $z$ is not in the same neighborhood as  $x$. In addition, with $X’,Y’$, we denote independent and random variables distributed as $X,Y$, respectively.

With the loss-function defined in Equation \ref{eqn:sel1} in mind, the empirical estimator is naturally defined as
\begin{multline}\label{eqn:sel2}
\hat{g}(\cdot)= \arg \min_{g= (g^{1}, \dots g^{p}) \in  \underbrace{H\times \cdots \times  H}_{\text{p-times}}} \\ \sum_{i=1}^{n} \sum_{j=1}^{n} \omega(X_i,X_j)(Y_i-Y_j-g^{T}(X_i)(X_i-X_j))\\ +\lambda \sum_{i=1}^{p} ||g^{i}||^{2}.
\end{multline}

Here, $H$ denotes the selected RKHS associated in each coordinate of the gradient we aim to learn, and $\lambda$ is the smoothing parameter that avoids over-fitting in the modeling task if it is appropriately selected. 

If we observe Equation \ref{eqn:sel2}, the objective function is convex and by definition the solution of the minimization problem lies inside the cartesian product of RKHS, which is as well an RKHS. Then we can apply the classical Representer theorem \cite{scholkopf2001generalized} to guarantee that the solution for each component of gradient is defined explicitly by $\hat{g}^i(x) = \sum_{j=1}^{n} \hat{\alpha}^{i}_j K(X_j,x)$ $(i=1,\cdots,p)$, where $\hat{\alpha}^{i}_j$’s are real numbers. Replacing the functional form of a solution in Equation \ref{eqn:sel2}, the optimization problem turns to be:

\begin{multline}\label{eqn:sel3}
\hat{\alpha}= \arg \min_{ \alpha= (\alpha^i_j)^{i=1,\cdots,p}_{j=1,\cdots,n}\in \mathbb{R}^{n\times p} } \\\sum_{i=1}^{n} \sum_{j=1}^{n} (Y_i-Y_j- \sum_{k=1}^{p}\alpha^{j}_{i}K(X^{k}_j,X^{k}_i)(X^{k}_j,X^{k}_i))
\\ \sum_{i=1}^{p} \sum_{j=1}^{n} \sum_{k=1}^{n}  \lambda \alpha_j^{i} K(X^{i}_j, X^{i}_k)\alpha_k^{i}
\end{multline}

Again, we can adapt the algorithm to MNAR data with the IPW estimator. In particular, we have:

\begin{multline}\label{eqn:sel4}
\hat{\alpha}= \arg \min_{\alpha= (\alpha^i_j)^{i=1,\cdots,p}_{j=1,\cdots,n}\in \mathbb{R}^{n\times p} } \\\sum_{i=1}^{n} \sum_{j=1}^{n} \hat{\omega}^*_{i}  \hat{\omega}^*_{j} \omega(X_i,X_j) (Y_i-Y_j- \sum_{k=1}^{p}\alpha^{j}_{i}K(X^{k}_j,X^{k}_i)(X^{k}_j,X^{k}_i))
\\+  \sum_{i=1}^{p} \sum_{j=1}^{n} \sum_{k=1}^{n}  \lambda \alpha_j^{i} K(X^{i}_j, X^{i}_k)\alpha_k^{i},
\end{multline}

where $\hat{\omega}^*_{i}$ is denotes the $i$th normalized weight according to the missing data mechanism, see Equation \ref{eqn:pesos2} for more details.

In practice, the smoothing parameter is selected via cross-validation, and the precedent optimization problem is solved with specific optimization gradient techniques of group Lasso \cite{ida2019fast,yang2015fast} taking into account the multidimensional block structure of $\hat{\alpha}^{i}_j$'s $(i=1,\cdots,p)$, that is, if any  $\hat{\alpha}^{i}_j=0$ $(j=1,\cdots,n)$, then $\hat{g}^{i}(\cdot)=0$. For specific details about this procedure, we refer the reader to the original paper, which concerns complete data. .

\subsection{Predictive models}\label{sec:predictive}

Given a sample $\{(X_i,Y_i)\}_{i=1}^{n}$, linear ridge regression is based on solving the following optimization problem:

\begin{equation}\label{eqn:pred1}
\hat{\beta}= \arg \min_{\beta \in \mathbb{R}^p} \sum_{i=1}^{n} (Y_i-<X_i,\beta>)^{2}+\lambda ||\beta||^{2}_2
\end{equation} 

which is given by $\hat{\beta} = (X^{T}X+\lambda I)^{-1} X^{T} Y$ where $X= \begin{pmatrix}
X_1 \\ 
\vdots \\
X_n
\end{pmatrix}$,  
$Y= \begin{pmatrix}
Y_1 \\ 
\vdots \\
Y_n
\end{pmatrix}$ and $\lambda>0$ is the smoothing parameter of regularization term. \\

Let $H_{X}$ be a RKHS space with kernel $K_{X}(\cdot, \cdot)= <\phi(\cdot), \phi(\cdot)>$. Then, if in Equation \ref{eqn:pred1} we  transform each $X_i$ into $\phi(X_i)$ $(i=1,\dots,n)$  and  suppose that $\beta= \sum_{i=1}^{n} \alpha_i \phi(X_i)$, we can obtain a solution to the ridge problem with a similar structure only changing the usual dot product by the inner product of the selected RKHS space. In particular, we have $\hat{\alpha} = (K+\lambda I)^{-1} Y$, where 
\vspace{0.7cm}
\begin{center}
	$K= \begin{pmatrix}
	K_{X}(X_1,X_1) & \cdots & K_{X}(X_1,X_n) \\ 
	\vdots & \ddots & \vdots \\
	K_{X}(X_n,X_1) & \dots &  K_{X}(X_n,X_n)
	\end{pmatrix}$. 
	\vspace{0.7cm}
\end{center}

Following Liu and Goldberg \cite{liu2020kernel}, they propose  two different estimator when the response is missing in this setting. In both cases, the solution has the same close-expression given by  Representer Theorem \cite{scholkopf2001generalized} in the usual way.   First, they handle missing data mechanism via IPW estimator an obtain $\hat{\alpha}= (\lambda I+ KW)^{-1} W Y$. Second, through doubly robust estimation  that combines  preliminary imputation  of missing response with IPW estimator, they get

 \begin{equation}\label{eqn:regmiss}
\hat{\alpha}= (K+\lambda I)^{-1}(WY+ (I-W)\mu(X)), where
\end{equation}

\vspace{0.7cm}

$W= \begin{pmatrix}
w_1 & 0 & 0 \\ 
0 & \ddots & 0 \\
0 & 0 &  w_n
\end{pmatrix},$  denotes a diagonal matrix that contains the weights $w_i's$ (see Equation \ref{eqn:pesos}) and  \vspace{0.7cm} \vspace{0.7cm}
 $\mu(X)= \begin{pmatrix}
\mu(X_i) \\
\vdots \\
\mu\left(X_n\right)
\end{pmatrix}$, where $\mu(\cdot)$ denotes the imputation function.

Doubly robust estimators have optimal asymptotic variance when their weights  $w_i’s$ and their imputation model are correctly specified, and only one of these approaches needs to be correctly specified to achieve consistency. However, if any of them fails, then the model performance can deteriorate dramatically with finite sample \cite{kang2007demystifying}. When both models are misleadingly specified, no gain may be obtained with this more sophisticated approach \cite{vermeulen2015bias,kang2007demystifying}.

An essential issue in the performance of these models and that has received considerable attention in recent years is the smoothing parameter’s impact on the generalization of the models, which is strongly connected with the interpolation problem in RKHS space with minimum norm. In our setting, we have selected the smoothing parameter through \textit{leave-one-out} cross-validation, calculating the explicit expression using linear regression theory with the missing data formulae of estimators. Further details about the recent theoretical advances in this field are available in the following papers \cite{liang2020just, hastie2019surprises, bartlett2020benign}.


To end this subsection, we introduce a specific algorithm to perform conformal inference based on \cite{lei2020conformal}, that allows to provide an interval that contains the response with a confidence level $1-\alpha$ for new observation $X_{n+1}$. 

We randomly split the data $\{(X_i, Y_i, R_i)\}_{i=1}^{n}$ without replacement in two samples $X^{training}=\{(X^{training}_i, Y^{training}_i, R^{training}_i)\}_{i=1}^{n_1}$, $X^{test}=\{(X^{test}_i, Y^{test}_i, R^{test}_i)\}_{i=1}^{n_2}$ of size $n_1$, $n_2$ respectively, with $n_1+n_2=n$. The steps of the algorithm are summarized as below:

\begin{enumerate}
	
	\item Using  $X^{training}$, fit the mean regression function $\hat{m}(\cdot)$ according the method provided in the Equation \ref{eqn:regmiss}. 
	
	\item  For all $i=1,\cdots n_2$ with $R^{test}_{i}=1$, define the following non-conformal measure:
	
	$\hat{\epsilon}_{i}= |Y^{test}_i-\hat{m}(X^{test}_i)|.$ 
	
	\item We estimate the empirical distribution $\hat{F}_{n_{2}+1}^{\epsilon}$ using the previous residuals and representing the theoretical residual of observation $X_{n+1}$ with the artificial value of infinite. For this task, we use the IPW estimate with weights defined in Equations (\ref{eqn:pesos},\hspace{0.02cm} \ref{eqn:pesos2}) and the function $\hat{\pi}^{training}(\cdot)$ calculated in Step $1$, where we must incorporate also the weight of $X_{n+1}$, $\hat{w}_{n_{2+1}}$.
	
	\item With $\hat{F}_{n_{2}+1}^{\epsilon}$, calculate the $1-\alpha$ quantile that we denote with $\hat{q}_{1-\alpha}$.
	
	\item Return $[\hat{m}(X_{n+1})-\hat{q}_{1-\alpha}, \hat{m}(X_{n+1}) +\hat{q}_{1-\alpha}]$ as the searched interval.

\end{enumerate}

\subsection{Handling multiple sources with a kernel}\label{sec:multkernel}

RKHS modeling is a powerful data analysis paradigm that allows efficient data analysis of different nature simultaneously \cite{borgwardt2006integrating}. To do this, the critical point is to select a suitable kernel that accurately captures the differences and specific characteristics of each of the information sources examined. In our particular case, we have a continuous probability distribution, multidimensional data, and categorical data, $X= (X^{gluco}, X^{mult}, X^{categ})\in V^{gluco} \times V^{mult} \times V^{categ} $.  We know that the Gaussian kernel with the standard Euclidean distance is a characteristic and universal kernel with vectorial real-data. Moreover, we can show that Gaussian-Kernel conserves those mentioned above, considering the set of continuous density functions endowed with $L2-$Wasserstein geometry. Under these conditions, we have theoretical guarantees that we can approximate a large variety of functional forms $m(\cdot)$ in each model fitted individual. However, we want to detect possible interactions between different data sources, and for this, we must build a proper global kernel. Based on the connection between the defined positive kernel and the negative type metrics \cite{lyons2013distance} \cite{berg1984harmonic,sejdinovic2013equivalence}, we know some properties that allow us to build a simple kernel that integrates the three sources. To this end, in our setting, we can define a global Gaussian kernel as

\begin{multline}\label{eqn:gauss}
K_{X}(x=(x^{gluco}, x^{mult}, x^{categ}),y=(y^{gluco}, y^{mult}, y^{categ})) \\= e^{-(a\frac{||x^{gluco}-y^{gluco}||^2}{\sigma^2 _{gluco}}+b\frac{||x^{mult}-y^{mult}||^2}{\sigma^2_{mult}}+c\frac{||x^{categ}-y^{categ}||^2}{\sigma^2_{categ}})} \\ \forall x,y\in V
\end{multline}

where $a,b,c, \sigma_{gluco}, \sigma_{mult} ,\sigma_{categ} >0$ and we assume for the sake of simplicity that  $(a,b,c)\in \{(a,b,c)\in \mathbb{R}^{3}: a+b+c=1 \hspace{0.2cm} \text{and} \hspace{0.2cm} 0\leq a \leq 1,  0\leq b \leq 1, 0\leq c \leq 1 \}$.

 A more refined variety of strategies to build a global kernel can be found in \cite{gonen2011multiple}.

\subsection{Selection parameter in Gaussian kernels}\label{sec:gauskernel}

 It is well known that kernel parameter tuning is more sensitive than the choice itself \cite{scholkopf2002learning} among a family of kernels with the property of being characteristic or universal \cite{simon2018kernel} in different modeling tasks. For this reason, here, we have restricted ourselves to the Gaussian kernel. Besides, there is an available rule to select the bandwidth parameter $\sigma>0$ as the heuristic median \cite{garreau2017large}.

Let $\{X_i= (X_i^{gluco}, X_i^{mult}, X_i^{categ}) \}_{i=1}^{n}$ be the sample and we build the kernel matrix $(K)_{i,j}= K(X_i,X_j),$ $i,j=1,\dots, n$ where $K(X_i, X_j)= e^{-\frac{||X_i-X_j||^{2}}{\sigma^{2}}}$. The heuristic median rule $\sigma>0$ is defined as:

\begin{equation}\label{eqn:heuristic}	
\sigma= \sqrt{median\{||X_i-X_j||^{2}: 1\leq i < j\leq n \}}
\end{equation} 

In the sense of \cite{reddi2014decreasing}, we suggest to find the optimal kernel bandwidth parameter $\sigma^{*}$ in a grid of points of the form  $\sigma^{*}= \sigma^{\gamma}$ with $\gamma\in (0,3]$. In the setting of our global Gaussian kernel \ref{eqn:gauss}, we propose to use the heuristic median rule \ref{eqn:heuristic} for each data type with the grid strategy shown above to select $\sigma_{gluco}, \sigma_{mult} ,\sigma_{categ}$.  $a,b,c>0$ parameters are selected also in a grid  but in this case which belong to a 3-dimensional-simplex.

Finally, to incorporate the missing data mechanism in the kernel bandwidth estimator, we calculate the median through IPW estimator.

\section{AEGIS database analysis}

\subsection{Data Description}

The AEGIS population study conducted in the Spanish town of A Estrada (Galicia) aims to analyze the steady evolution of different clinical features such as longitudinal changes in circulating glucose in $1516$ patients over $10$ years. In addition, non-routinary medical tests such as continuous glucose monitoring are performed every five years on a randomized subset composed of $581$ patients. Table \ref{table:tablacaracteristicas} shows the basal characteristics of the $581$ continuous glucose monitored patients grouped by sex. The collected variables include age, glycosilated hemoglobin (A1C), fasting plasma glucose (FPG), insulin resistance (HOMA-IR), body mass index (BMI); along with glycemic variability metrics: continuous overall net glycemic action (CONGA), mean amplitude of glycaemic excursions (MAGE), mean of the daily differences (MODD). As we appointed before, A1C and FPG are the popular variables of diabetes diagnosis or control in the standard clinical routine. HOMA-IR or insulin resistance is an essential variable which is strongly connected to different cellular mechanisms of diabetes development \cite{rehman2016mechanisms}. Body mass index is a global variable of health status related to all mortality causes, progression of diseases, complications, and the onset of a wide multi-spectrum of diseases such as metabolic syndrome or diabetes. Finally, the different examined glucose variability metrics capture aspects of glucose values' oscillation along different periods. Glucose variability is a representative characteristic of glucose metabolism, being the third component of dysglycemia \cite{monnier2008glycemic}.

We can found specific details about how laboratory measurements and continuous glucose monitoring were performed \cite{gude2017glycemic,matabuena2020glucodensities}.

\begin{table}[ht!]
	\centering
	\scalebox{0.8}{
		\begin{tabular}{lll}
			\hline
			& Men $(n=220)$ & Women $(n=361)$ \\ 
			\hline
			Age, years & $47.8\pm 14.8$ & $48.2\pm14.5$ \\ 
			A1C, \% & $5.6\pm0.9$ & $5.5\pm0.7$ \\ 
			FPG $mg/dL$ & $97\pm23$ & $91\pm21$ \\ 
			HOMA-IR $mg/dL.\mu UI/m$ & $3.97\pm5.56$ & $2.74\pm2.47$ \\ 
			BMI $kg/m^2$ & $28.9\pm4.7$ & $27.7\pm5.3$ \\ 
			CONGA $mg/dL$  & $0.88\pm0.40$ & $0.86\pm0.36$ \\ 
			MAGE $mg/dL$ & $33.6\pm22.3$ & $31.2 \pm14.6$ \\ 
			MODD & $0.84\pm0.58$ & $0.77\pm0.33$ \\ 
			\hline
	\end{tabular}} 
	\caption{Characteristics of AEGIS study participants with CGM monitoring by sex. Mean and standard deviation are shown.	$BMI$ - body mass index; $FPG$ - fasting plasma glucose; $A1c$ - glycated haemoglobin; $HOMA-IR$ - homeostasis model assessment-insulin resistance; $CONGA$ - glycemic variability in terms of continuous overall net glycemic action; $MODD$ - mean of daily differences; $MAGE$ - mean amplitude of glycemic excursions.}
	\label{table:tablacaracteristicas}
\end{table}

We address the main challenge of forecasting the relationship between the predictors and the outcome five years ahead: A1C to be precise. $40$ \% data of this variable is missing. Therefore, we have to use specific missing data techniques that limit biases in the obtained results to be widely generalized and reproducible to other study populations. 

\subsection{Clinical questions}

With the values in Table \ref{table:tablacaracteristicas} \emph{together} that were collected through CGM data via glucodensity representation \cite{matabuena2020glucodensities}  (Figure \ref{fig:graf1}), we treat to answer the following clinical open problems:

\begin{enumerate}
	\item We want to study if there exists a statistical association between each predictor using the A1C five year ahead, as outcome. For this purpose, we use the Hilbert-Schmidt independence criterion that we propose in the context of missing data in Section \ref{sec:independence} together with a specific bootstrap approach that we design for such a task. Moreover, we study the clinical relevance of marginal associations with a bidimensional plot.
	
	\item  We selected the non-parametric subset of variables (Section \ref{sec:selection}) that best explains glucose changes using vectorial valued data of Table \ref{table:tablacaracteristicas}. With this analysis, we can obtain new clinical knowledge about this diabetes biomarker's role in changes of glucose metabolism.

	\item We assess the impact of introducing CGM data via glucodensities in the models predictive capacity. For this purpose, we fit two kernel ridge regression models (Section \ref{sec:predictive}): one that includes glucodensities and other which does not. In some patients who residuals are large for, we use conformal inference to measure the uncertainty and characterize patient phenotypes.

\end{enumerate}

\subsection{Results}

\subsubsection{Statistical univariate association between predictors and difference A1c-post vs A1c-pre}

Our aim is to study whether there is any evidence of univariate statistical association for normoglycemic patients between glucose variation measured by  $A1C_{5-years}-A1C_{initial}$ and the variables shown in Table \ref{table:tablacaracteristicas}. In this particular case, the underlying missing data mechanism is estimated using univariate logistic regression.

\begin{figure}[ht!]
	\centering
	\includegraphics[width=0.9\linewidth]{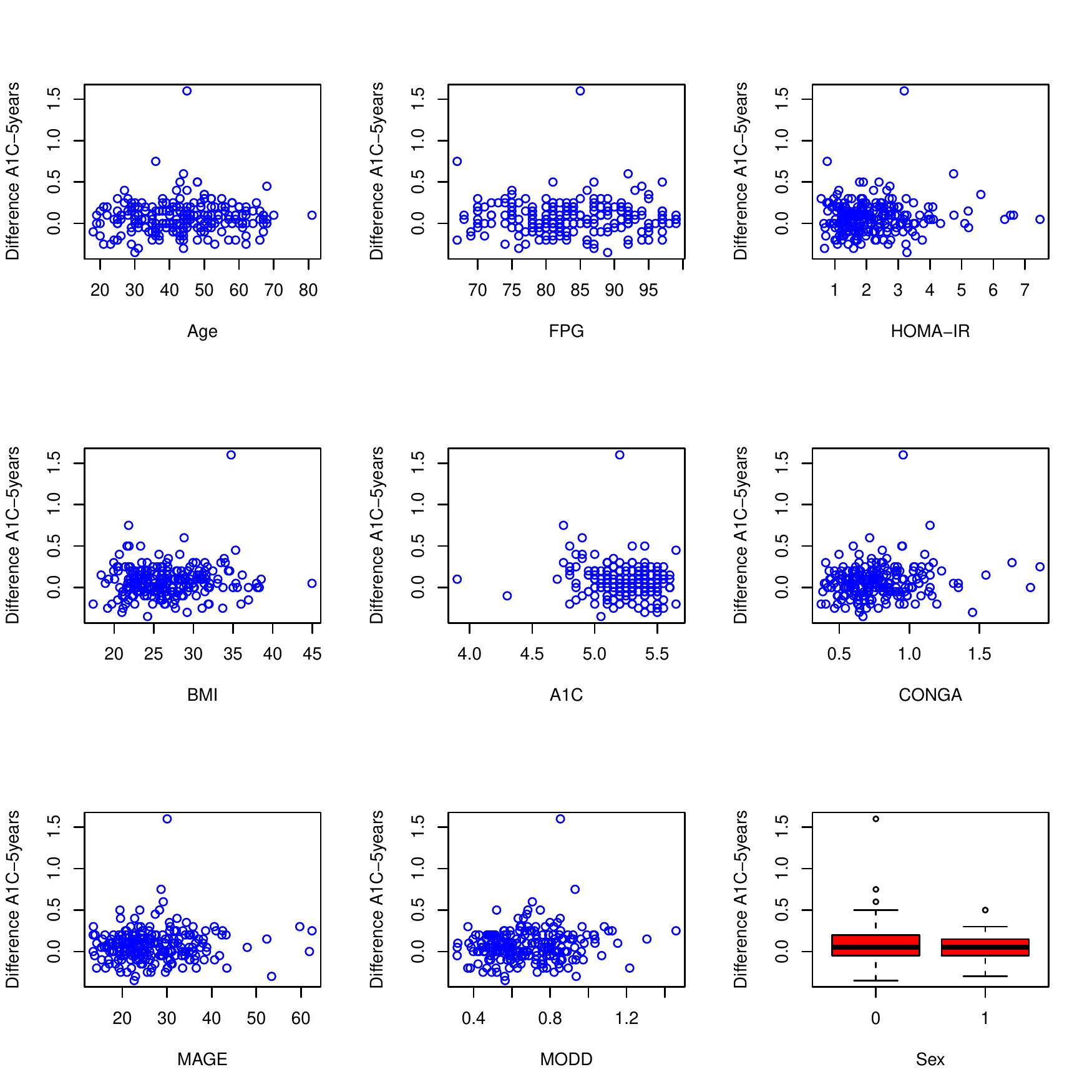}

\end{figure}
\begin{table}[ht!]
	\centering
	\begin{tabular}{lr}
		\hline
		Variable & $p-value$ \\ 
		\hline
		Age & $0.32$ \\ 
		Sex & $0.16$ \\ 
		FPG & $0.50$ \\ 
		HOMA-IR & $0.52$ \\ 
		BMI & $0.42$ \\ 
		A1C & $0.03$ \\ 
		CONGA & $0.24$ \\ 
		MAGE & $0.68$ \\ 
		MODD & $0.16$ \\ 
		Glucodensity & $<0.001$\\
		\hline
	\end{tabular}
\caption{Estimated raw p-values of A1C total variation vs each biomarker using the procedure defined in Section \ref{sec:independence} with normoglycemic patients. The figure shows the marginal dependence relation between examined variables}
	\label{fig:graf2}
\end{table}

Results in Table \ref{fig:graf2} show that the only statistically significant variables with a p-value less than 5\% are glucodensities and basal A1C. The plots above illustrate that the dependency relations among vectorial variables are weak, in case any of them held. Next, we are using multivariate models that exploit potential interactions between variables to improve association with changes in the A1C variable.

\subsubsection{Variable selection of vector-valued features with a non-parametric model}

We have adjusted the model defined in Section \ref{sec:selection} seeking the subset of variables most strongly associated with A1C values five years ahead. For this purpose, we  used $581$ patients, and we considered all the variables on Table \ref{table:tablacaracteristicas} except sex. In order to avoid overfitting and improving results reproducibility, we select model parameters using cross-validation. Finally, we estimate the underlying missing data mechanism via lasso logistic regression.

 
In this case, the variables selected by the algorithm are: Age, A1C, FPG, BMI, and MAGE. It is essential to note that the procedure used detected higher-order interactions between the covariates and the respective variable from  multivariate perspective. Moreover, in contrast to the previous section, both diabetic and non-diabetic patients have been analyzed.

\subsubsection{Kernel ridge regression prediction of future glucose values}

We fit two kernel ridge regression models with the goal of predicting A1C at five years ahead. The first includes non-CGM-variables A1C, FPG, Age, BMI as covariates, and the second one incorporates CGM data via glucodensities as well. Kernel selection and parameter tuning have been calibrated following the indications on Sections \ref{sec:multkernel} and \ref{sec:gauskernel}. The $R^2$ of the first model, according to missing data mechanism and by leave-one cross-validation calculated, is $0.56$, whereas in the second case, it is $0.66$. In Figure \ref{fig:grafico2pred}, we plot the residues against each value of basal A1c. In general, we can say that the most considerable residues are found in diabetic patients, while in other cases, the distribution of individuals residuals is heterogeneous depending on the patient's characteristics. 

These results show that when introduce in the models the glucodensity, CGM information can provide a piece of valuable extra knowledge on long-term glucose changes.

Figure \ref{fig:grafico3pred} depicts confidence intervals at a confidence level of 90 $\%$ after applying conformal inference methodology to measure the uncertainty of the predictions performed by the regression model.

\begin{figure}[ht!]
	\centering
	\includegraphics[width=0.9\linewidth]{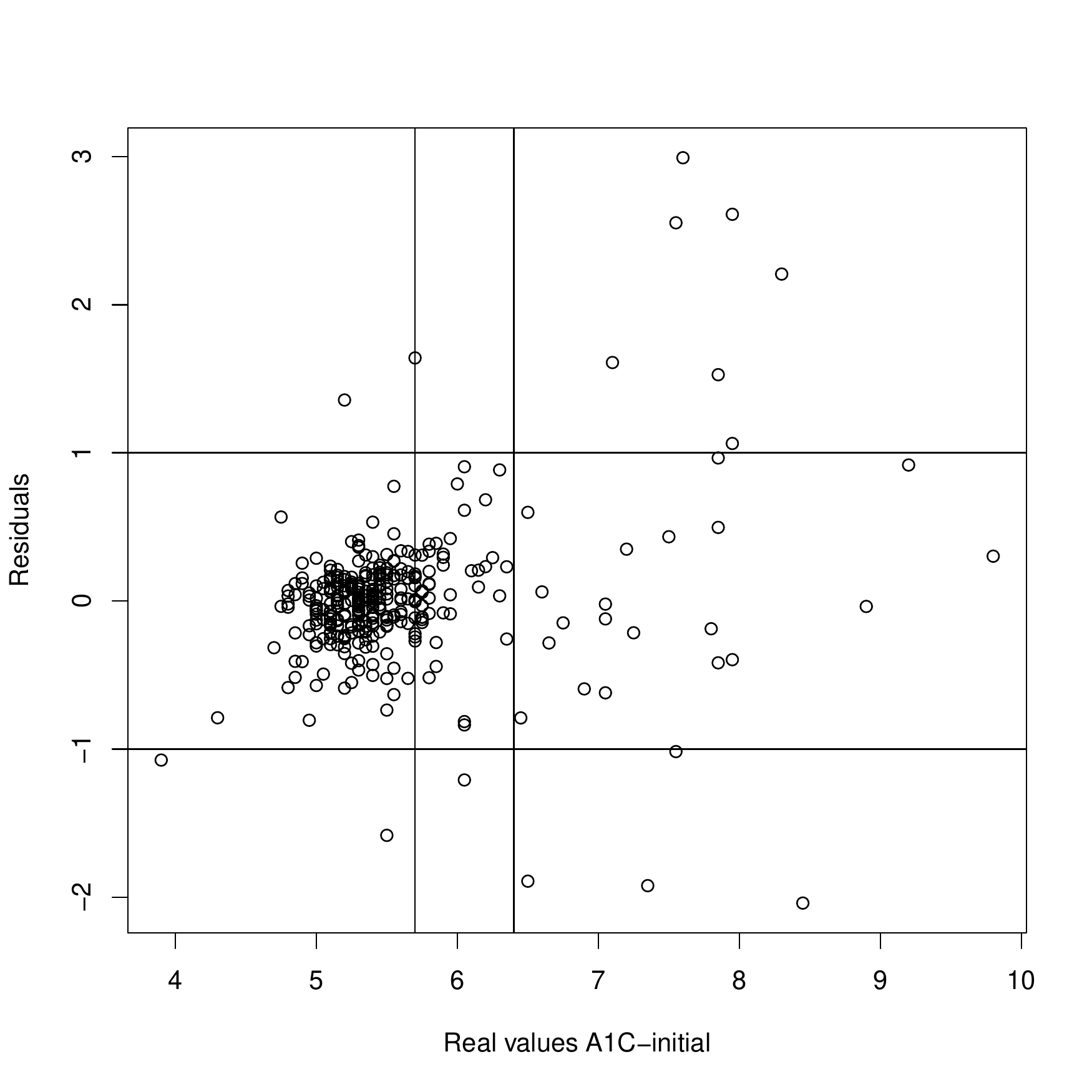}
	\caption{Residuals vs. A1C-initial for the model that includes glucodensity as a covariate.}
	\label{fig:grafico2pred}
\end{figure}

Below, we consider that a patient has a considerable uncertainty in their A1C prediction if the length interval is greater than $1$.  In this case, we can characterize clinical features that allow to assign a patient high-low variability groups based on  future glucose values uncertainty. In particular, following Figure \ref{fig:arbol}, we see that if a patient was diagnosed with diabetes before  basal period, there are essential uncertainty in their glucose values in the future. In addition, the same happens if the patients have an elevated HOMA-IR, overweight and advanced age.

\begin{figure}[ht!]
	\centering
	\includegraphics[width=0.9\linewidth]{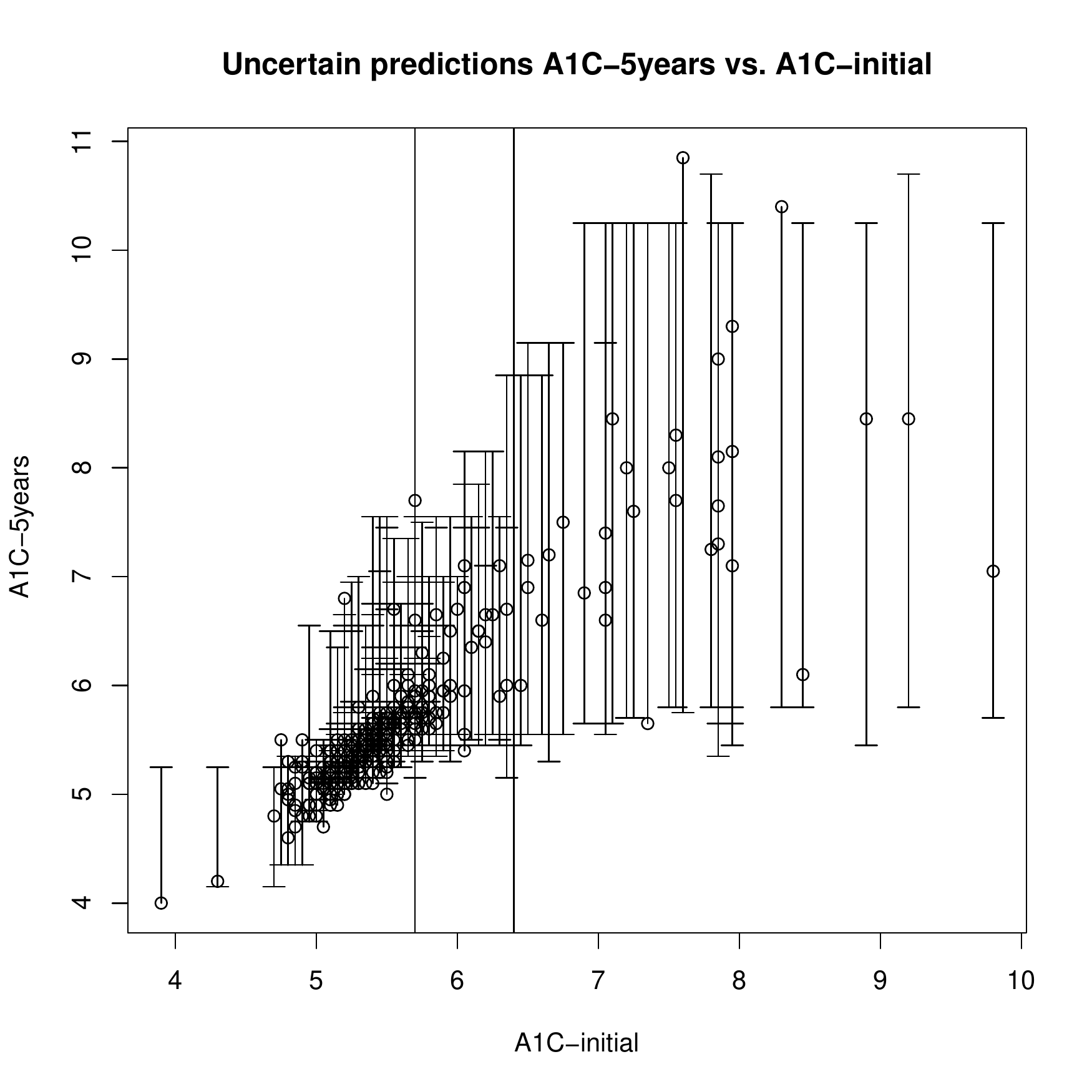}
		\caption{Conformal inference intervals of kernel ridge regression predictions for each observed response of A1C in the AEGIS database 90$\%$ nominal level coverage .}
	\label{fig:grafico3pred}
\end{figure}

\newpage

\begin{figure}[ht!]
	\centering
\includegraphics[width=0.8\linewidth]{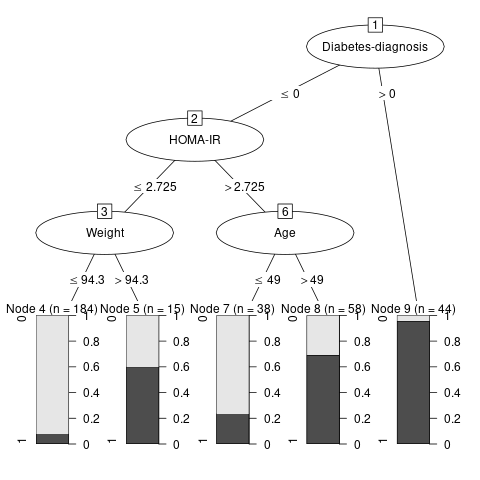}
		\caption{Collection of clinical decision rules that allow us to identify whether a patient is going to have a lot of uncertainty in their A1C predictions or not.  We establish that there is much uncertainty in the patient's glucose changes if the prediction's conformal interval is longer than 1.}
		\label{fig:arbol}
\end{figure}

\section{Discussion}

The incidence and proliferation of diabetes is one of the most critical public health problems in the world \cite{ginter2013type}. With the purpose to gain new clinical knowledge in this field and support medical decision making, the relationship between patient basal characteristics at the start of the AEGIS study and five-year A1C values has been modeled. First, we have identified several biomarkers associated with five-year glucose variations. Second, we have analyzed two nonlinear regression models’ predictive capacity to forecast A1C, showing the advantages of CGM technology for this predictive task. Finally, we have identified clinical characteristics of patients who produced unacceptable fittings from a clinical point of view. In particular, we identify some patient phenotypes that need to be tracked with more attention. We easily described them using routine biomarkers of standard clinical practice.

In order to improve clinical decisions, such as the design of optimal dynamic interventions \cite{tsiatis2019dynamic}, it is essential to design tools that quantify the future state of a patient’s homeostasis based on his or her current glycemic profile and other clinical variables. This problem is also fundamental to the identification of patients at risk of developing diabetes or in the detection of risk situations or other complications such as retinopathy. Here we have seen that we can predict this relationship in an acceptable way five years ahead, measuring A1C with a sample that includes diabetic and non-diabetic patients. However, in some patients, the discrepancies are significant. Changes in a patient’s body composition, pharmacological treatments, lifestyle such as physical activity patterns, diet, or disease development over these five years could explain partially these changes from the biological point of view.

Our phenotypes show that both insulin resistance- the previous diabetes diagnosis- and overweight explain if the performed predictions show a considerable level of uncertainty according to our model. From a practical point of view, this means that there is a more significant variability in glucose changes five years ahead in these patients, and as a result, their future glycemic status is uncertain. It is advisable to perform more routine follow-ups of this group of patients; and the performed interventions should have a personalized focus based on their dynamic evolution against treatments.

The $R^2$ value obtained predicting A1C via leave-one cross-validation with the introduction of CGM information is similar to the one reported by other authors, even though some of them predict these biomarkers in the short term \cite{gaynanova2020modeling, zaitcev2020deep}. Introducing continuous glucose monitoring information through the concept of glucodensity \cite{matabuena2020glucodensities} may provide extra information on glucose fluctuations and provide more accuracy in the prediction of A1C. However, other studies do not use a random sample; the patients are in standardized conditions and analyze different target populations; consequently, direct and accurate comparison between clinical findings and studies is not an easy issue.

Using statistical dependence measures with normoglycemic patients to test the marginal association of biomarkers with A1C demonstrates that we must use multivariate models to capture the complexity of long-term glucose changes. In this sense, we can naturally introduce several sources of information into models simultaneously with Kernel ridge regression or other RKHS techniques as the HSIC dependence measure. These data sources can include glucose profiles through glucodensities or other sources of information that have not been considered in the present work, such as omics data, and may have a significant impact on the glucose changes \cite{gou2020interpretable}. We can detect higher-order interactions between model variables as well.

From a methodological perspective, new extensions of the proposed models arise naturally as doubly robust approaches and longitudinal models that dynamically allow to introduce patient condition changes in body mass index or other relevant variables and update model predictions in real-time. At the same time, in order to improve models performance, as a future work topic it is exciting to explore the possibility of fitting some model parameters with powerful techniques of machine learning that combine models such as Super Learner \cite{van2007super}, create specific semi-parametric models for this domain \cite{tsiatis2007semiparametric}, or use other distances/kernels in our RKHS framework.

This paper predicts how much the primary variable in diabetes diagnosis and control changes. However, glucose metabolism is very complex, and other decision criteria derived from CGM data may be used in standard clinical routines that capture other aspects of glucose metabolism that go beyond glucose mean \cite{hirsch2010beyond,beyond2018need}. In this sense predicting changes in glucodensities five years ahead using the baseline data of the AEGIS study would be exciting, and it can provide a clear picture of glucose evolution values in time at the distributional level.

\section{Conclusion and practical implications}

This work proposed a new framework of data analysis based on RKHS learning when some response entries are missing. This situation is commonplace in medical studies when testing patients' evolution in different periods. Our new tools allow testing statistical independence, selecting variables, predicting, and making inferences about the predictions in the context of missing responses. As a relevant example of application, we have illustrated the usefulness of these methods for predicting glucose progression five years ahead, including the novel introduction of continuous glucose monitoring as a predictor.  The results show that predicting glucose homeostasis' evolution with continuous monitoring can provide more information than widely used classic diabetes biomarkers. Our predictive model can support clinical medical decisions with the identification of patients at risk for developing diabetes or complications in some groups of patients where model uncertainty is low. Finally, we have characterized the phenotype of patients with many discrepancies between real and predicted values using easily-measured variables. We must plan more personalized follow-ups for these patients taking into account dynamic changes in patients' conditions.

\section*{Acknowledgment}

This work has received financial support from Instituto de Salud Carlos III (ISCIII), Grant/Award Number: PI16/01395; Ministry of Economy and Competitiveness (SPAIN)  European Regional Development Fund (FEDER);  the Axencia Galega de Innovación, Consellería de Economía, Emprego e Industria, Xunta de Galicia, Spain, Grant/Award Number: GPC IN607B 2018/01; Spanish Ministry of Science, Innovation and Universities (grant RTI2018-099646-B-I00),  Galician Ministry of Education, University and Professional Training (grant 2019-2022 ED431G-2019/04).

\bibliographystyle{apalike}

\bibliography{bibliografia.bib}

\begin{thebibliography}{}

\bibitem[Arcones and Gine, 1992]{arcones1992bootstrap}
Arcones, M.~A. and Gine, E. (1992).
\newblock On the bootstrap of u and v statistics.
\newblock {\em The Annals of Statistics}, pages 655--674.

\bibitem[Bang and Robins, 2005]{bang2005doubly}
Bang, H. and Robins, J.~M. (2005).
\newblock Doubly robust estimation in missing data and causal inference models.
\newblock {\em Biometrics}, 61(4):962--973.

\bibitem[Bartlett et~al., 2020]{bartlett2020benign}
Bartlett, P.~L., Long, P.~M., Lugosi, G., and Tsigler, A. (2020).
\newblock Benign overfitting in linear regression.
\newblock {\em Proceedings of the National Academy of Sciences}.

\bibitem[Battelino et~al., 2019]{battelino2019clinical}
Battelino, T., Danne, T., Bergenstal, R.~M., Amiel, S.~A., Beck, R., Biester,
  T., Bosi, E., Buckingham, B.~A., Cefalu, W.~T., Close, K.~L., et~al. (2019).
\newblock Clinical targets for continuous glucose monitoring data
  interpretation: recommendations from the international consensus on time in
  range.
\newblock {\em Diabetes Care}, 42(8):1593--1603.

\bibitem[Beck et~al., 2019]{beck2019validation}
Beck, R.~W., Bergenstal, R.~M., Riddlesworth, T.~D., Kollman, C., Li, Z.,
  Brown, A.~S., and Close, K.~L. (2019).
\newblock Validation of time in range as an outcome measure for diabetes
  clinical trials.
\newblock {\em Diabetes Care}, 42(3):400--405.

\bibitem[Berg et~al., 1984]{berg1984harmonic}
Berg, C., Christensen, J. P.~R., and Ressel, P. (1984).
\newblock {\em Harmonic analysis on semigroups: theory of positive definite and
  related functions}, volume 100.
\newblock Springer.

\bibitem[Bertsimas et~al., 2016]{bertsimas2016best}
Bertsimas, D., King, A., and Mazumder, R. (2016).
\newblock Best subset selection via a modern optimization lens.
\newblock {\em The annals of statistics}, pages 813--852.

\bibitem[Beutner and Z{\"a}hle, 2012]{beutner2012deriving}
Beutner, E. and Z{\"a}hle, H. (2012).
\newblock Deriving the asymptotic distribution of u-and v-statistics of
  dependent data using weighted empirical processes.
\newblock {\em Bernoulli}, pages 803--822.

\bibitem[Boistard et~al., 2017]{boistard2017functional}
Boistard, H., Lopuha{\"a}, H.~P., Ruiz-Gazen, A., et~al. (2017).
\newblock Functional central limit theorems for single-stage sampling designs.
\newblock {\em The Annals of Statistics}, 45(4):1728--1758.

\bibitem[Borgwardt et~al., 2006]{borgwardt2006integrating}
Borgwardt, K.~M., Gretton, A., Rasch, M.~J., Kriegel, H.-P., Sch{\"o}lkopf, B.,
  and Smola, A.~J. (2006).
\newblock Integrating structured biological data by kernel maximum mean
  discrepancy.
\newblock {\em Bioinformatics}, 22(14):e49--e57.

\bibitem[Chen et~al., 2017]{chen2017kernel}
Chen, J., Stern, M., Wainwright, M.~J., and Jordan, M.~I. (2017).
\newblock Kernel feature selection via conditional covariance minimization.
\newblock {\em Advances in Neural Information Processing Systems},
  30:6946--6955.

\bibitem[Cho et~al., 2018]{cho2018idf}
Cho, N., Shaw, J., Karuranga, S., Huang, Y., da~Rocha~Fernandes, J., Ohlrogge,
  A., and Malanda, B. (2018).
\newblock Idf diabetes atlas: Global estimates of diabetes prevalence for 2017
  and projections for 2045.
\newblock {\em Diabetes research and clinical practice}, 138:271--281.

\bibitem[Cirillo and Valencia, 2019]{cirillo2019big}
Cirillo, D. and Valencia, A. (2019).
\newblock Big data analytics for personalized medicine.
\newblock {\em Current opinion in biotechnology}, 58:161--167.

\bibitem[Coronato et~al., 2020]{coronato2020reinforcement}
Coronato, A., Naeem, M., De~Pietro, G., and Paragliola, G. (2020).
\newblock Reinforcement learning for intelligent healthcare applications: A
  survey.
\newblock {\em Artificial Intelligence in Medicine}, 109:101964.

\bibitem[Dabelea et~al., 2017]{dabelea2017association}
Dabelea, D., Stafford, J.~M., Mayer-Davis, E.~J., D’Agostino, R., Dolan, L.,
  Imperatore, G., Linder, B., Lawrence, J.~M., Marcovina, S.~M., Mottl, A.~K.,
  et~al. (2017).
\newblock Association of type 1 diabetes vs type 2 diabetes diagnosed during
  childhood and adolescence with complications during teenage years and young
  adulthood.
\newblock {\em Jama}, 317(8):825--835.

\bibitem[Deka et~al., 2014]{deka2014support}
Deka, P.~C. et~al. (2014).
\newblock Support vector machine applications in the field of hydrology: a
  review.
\newblock {\em Applied soft computing}, 19:372--386.

\bibitem[Ding et~al., 2018]{ding2018causal}
Ding, P., Li, F., et~al. (2018).
\newblock Causal inference: A missing data perspective.
\newblock {\em Statistical Science}, 33(2):214--237.

\bibitem[Edlitz and Segal, 2020]{edlitz2020prediction}
Edlitz, Y. and Segal, E. (2020).
\newblock Prediction of type 2 diabetes mellitus onset using simple logistic
  regression models.
\newblock {\em medRxiv}.

\bibitem[Efron and Tibshirani, 1994]{efron1994introduction}
Efron, B. and Tibshirani, R.~J. (1994).
\newblock {\em An introduction to the bootstrap}.
\newblock CRC press.

\bibitem[Ellahham, 2020]{ellahham2020artificial}
Ellahham, S. (2020).
\newblock Artificial intelligence in diabetes care.
\newblock {\em The American Journal of Medicine}.

\bibitem[Finkelstein et~al., 2012]{finkelstein2012obesity}
Finkelstein, E.~A., Khavjou, O.~A., Thompson, H., Trogdon, J.~G., Pan, L.,
  Sherry, B., and Dietz, W. (2012).
\newblock Obesity and severe obesity forecasts through 2030.
\newblock {\em American journal of preventive medicine}, 42(6):563--570.

\bibitem[Flegal et~al., 2012]{flegal2012prevalence}
Flegal, K.~M., Carroll, M.~D., Kit, B.~K., and Ogden, C.~L. (2012).
\newblock Prevalence of obesity and trends in the distribution of body mass
  index among us adults, 1999-2010.
\newblock {\em Jama}, 307(5):491--497.

\bibitem[for Disease~Control et~al., 2011]{centers2011national}
for Disease~Control, C., Prevention, et~al. (2011).
\newblock National diabetes fact sheet: national estimates and general
  information on diabetes and prediabetes in the united states, 2011.
\newblock {\em Atlanta, GA: US department of health and human services, centers
  for disease control and prevention}, 201(1):2568--2569.

\bibitem[Garreau et~al., 2017]{garreau2017large}
Garreau, D., Jitkrittum, W., and Kanagawa, M. (2017).
\newblock Large sample analysis of the median heuristic.
\newblock {\em arXiv preprint arXiv:1707.07269}.

\bibitem[Gaynanova et~al., 2020]{gaynanova2020modeling}
Gaynanova, I., Punjabi, N., and Crainiceanu, C. (2020).
\newblock Modeling continuous glucose monitoring (cgm) data during sleep.
\newblock {\em Biostatistics}.

\bibitem[Ghorbani et~al., 2020]{ghorbani2020neural}
Ghorbani, B., Mei, S., Misiakiewicz, T., and Montanari, A. (2020).
\newblock When do neural networks outperform kernel methods?
\newblock {\em Advances in Neural Information Processing Systems}, 33.

\bibitem[Gin{\'e} and Zinn, 1990]{gine1990bootstrapping}
Gin{\'e}, E. and Zinn, J. (1990).
\newblock Bootstrapping general empirical measures.
\newblock {\em The Annals of Probability}, pages 851--869.

\bibitem[Ginter and Simko, 2013]{ginter2013type}
Ginter, E. and Simko, V. (2013).
\newblock Type 2 diabetes mellitus, pandemic in 21st century.
\newblock In {\em Diabetes}, pages 42--50. Springer.

\bibitem[Goldberg and Kosorok, 2012]{goldberg2012q}
Goldberg, Y. and Kosorok, M.~R. (2012).
\newblock Q-learning with censored data.
\newblock {\em Annals of statistics}, 40(1):529.

\bibitem[G{\"o}nen and Alpayd{\i}n, 2011]{gonen2011multiple}
G{\"o}nen, M. and Alpayd{\i}n, E. (2011).
\newblock Multiple kernel learning algorithms.
\newblock {\em The Journal of Machine Learning Research}, 12:2211--2268.

\bibitem[Gou et~al., 2020]{gou2020interpretable}
Gou, W., Ling, C.-w., He, Y., Jiang, Z., Fu, Y., Xu, F., Miao, Z., Sun, T.-y.,
  Lin, J.-s., Zhu, H.-l., et~al. (2020).
\newblock Interpretable machine learning framework reveals robust gut
  microbiome features associated with type 2 diabetes.
\newblock {\em Diabetes Care}.

\bibitem[Gretton et~al., 2012]{gretton2012kernel}
Gretton, A., Borgwardt, K.~M., Rasch, M.~J., Sch{\"o}lkopf, B., and Smola, A.
  (2012).
\newblock A kernel two-sample test.
\newblock {\em The Journal of Machine Learning Research}, 13(1):723--773.

\bibitem[Gretton et~al., 2007]{gretton2007kernel}
Gretton, A., Fukumizu, K., Teo, C., Song, L., Sch{\"o}lkopf, B., and Smola, A.
  (2007).
\newblock A kernel statistical test of independence.
\newblock {\em Advances in neural information processing systems}, 20:585--592.

\bibitem[Group, 2018]{beyond2018need}
Group, B. A.~W. (2018).
\newblock Need for regulatory change to incorporate beyond a1c glycemic
  metrics.
\newblock {\em Diabetes Care}, 41(6):e92--e94.

\bibitem[Gude et~al., 2017]{gude2017glycemic}
Gude, F., D{\'\i}az-Vidal, P., R{\'u}a-P{\'e}rez, C., Alonso-Sampedro, M.,
  Fern{\'a}ndez-Merino, C., Rey-Garc{\'\i}a, J., Cadarso-Su{\'a}rez, C.,
  Pazos-Couselo, M., Garc{\'\i}a-L{\'o}pez, J.~M., and Gonzalez-Quintela, A.
  (2017).
\newblock Glycemic variability and its association with demographics and
  lifestyles in a general adult population.
\newblock {\em Journal of diabetes science and technology}, 11(4):780--790.

\bibitem[Gunasekeran et~al., 2020]{gunasekeran2020artificial}
Gunasekeran, D.~V., Ting, D.~S., Tan, G.~S., and Wong, T.~Y. (2020).
\newblock Artificial intelligence for diabetic retinopathy screening,
  prediction and management.
\newblock {\em Current opinion in ophthalmology}, 31(5):357--365.

\bibitem[Hastie et~al., 2019]{hastie2019surprises}
Hastie, T., Montanari, A., Rosset, S., and Tibshirani, R.~J. (2019).
\newblock Surprises in high-dimensional ridgeless least squares interpolation.
\newblock {\em arXiv preprint arXiv:1903.08560}.

\bibitem[Hazimeh and Mazumder, 2018]{hazimeh2018fast}
Hazimeh, H. and Mazumder, R. (2018).
\newblock Fast best subset selection: Coordinate descent and local
  combinatorial optimization algorithms.
\newblock {\em arXiv preprint arXiv:1803.01454}.

\bibitem[He et~al., 1996]{he1996general}
He, X., Shao, Q.-M., et~al. (1996).
\newblock A general bahadur representation of m-estimators and its application
  to linear regression with nonstochastic designs.
\newblock {\em The Annals of Statistics}, 24(6):2608--2630.

\bibitem[Hirsch and Brownlee, 2010]{hirsch2010beyond}
Hirsch, I.~B. and Brownlee, M. (2010).
\newblock Beyond hemoglobin a1c—need for additional markers of risk for
  diabetic microvascular complications.
\newblock {\em Jama}, 303(22):2291--2292.

\bibitem[Hu et~al., 2015]{hu2015curbing}
Hu, F.~B., Satija, A., and Manson, J.~E. (2015).
\newblock Curbing the diabetes pandemic: the need for global policy solutions.
\newblock {\em Jama}, 313(23):2319--2320.

\bibitem[Ida et~al., 2019]{ida2019fast}
Ida, Y., Fujiwara, Y., and Kashima, H. (2019).
\newblock Fast sparse group lasso.
\newblock In {\em Advances in Neural Information Processing Systems}, pages
  1702--1710.

\bibitem[Ivanciuc et~al., 2007]{ivanciuc2007applications}
Ivanciuc, O. et~al. (2007).
\newblock Applications of support vector machines in chemistry.
\newblock {\em Reviews in computational chemistry}, 23:291.

\bibitem[Janssen et~al., 2003]{janssen2003bootstrap}
Janssen, A., Pauls, T., et~al. (2003).
\newblock How do bootstrap and permutation tests work?
\newblock {\em The Annals of statistics}, 31(3):768--806.

\bibitem[Kang et~al., 2007]{kang2007demystifying}
Kang, J.~D., Schafer, J.~L., et~al. (2007).
\newblock Demystifying double robustness: A comparison of alternative
  strategies for estimating a population mean from incomplete data.
\newblock {\em Statistical science}, 22(4):523--539.

\bibitem[Korolyuk and Borovskich, 2013]{korolyuk2013theory}
Korolyuk, V.~S. and Borovskich, Y.~V. (2013).
\newblock {\em Theory of U-statistics}, volume 273.
\newblock Springer Science \& Business Media.

\bibitem[Kosorok, 2007]{kosorok2007introduction}
Kosorok, M.~R. (2007).
\newblock {\em Introduction to empirical processes and semiparametric
  inference}.
\newblock Springer Science \& Business Media.

\bibitem[Kosorok and Laber, 2019]{kosorok2019precision}
Kosorok, M.~R. and Laber, E.~B. (2019).
\newblock Precision medicine.
\newblock {\em Annual review of statistics and its application}, 6:263--286.

\bibitem[Kosorok and Moodie, 2015]{kosorok2015adaptive}
Kosorok, M.~R. and Moodie, E.~E. (2015).
\newblock {\em Adaptive treatment strategies in practice: planning trials and
  analyzing data for personalized medicine}.
\newblock SIAM.

\bibitem[Laird, 1988]{laird1988missing}
Laird, N.~M. (1988).
\newblock Missing data in longitudinal studies.
\newblock {\em Statistics in medicine}, 7(1-2):305--315.

\bibitem[Lee et~al., 2013]{lee2013prediction}
Lee, B.~J., Ku, B., Nam, J., Pham, D.~D., and Kim, J.~Y. (2013).
\newblock Prediction of fasting plasma glucose status using anthropometric
  measures for diagnosing type 2 diabetes.
\newblock {\em IEEE journal of biomedical and health informatics},
  18(2):555--561.

\bibitem[Lei and Cand{\`e}s, 2020]{lei2020conformal}
Lei, L. and Cand{\`e}s, E.~J. (2020).
\newblock Conformal inference of counterfactuals and individual treatment
  effects.
\newblock {\em arXiv preprint arXiv:2006.06138}.

\bibitem[Li et~al., 2017]{li2017digital}
Li, X., Dunn, J., Salins, D., Zhou, G., Zhou, W., Sch{\"u}ssler-Fiorenza~Rose,
  S.~M., Perelman, D., Colbert, E., Runge, R., Rego, S., et~al. (2017).
\newblock Digital health: tracking physiomes and activity using wearable
  biosensors reveals useful health-related information.
\newblock {\em PLoS biology}, 15(1):e2001402.

\bibitem[Liang et~al., 2020]{liang2020just}
Liang, T., Rakhlin, A., et~al. (2020).
\newblock Just interpolate: Kernel “ridgeless” regression can generalize.
\newblock {\em Annals of Statistics}, 48(3):1329--1347.

\bibitem[Little and Rubin, 2019]{little2019statistical}
Little, R.~J. and Rubin, D.~B. (2019).
\newblock {\em Statistical analysis with missing data}, volume 793.
\newblock John Wiley \& Sons.

\bibitem[Liu et~al., 2020]{liu2020kernel}
Liu, T., Goldberg, Y., et~al. (2020).
\newblock Kernel machines with missing responses.
\newblock {\em Electronic Journal of Statistics}, 14(2):3766--3820.

\bibitem[Londschien et~al., 2020]{londschien2020change}
Londschien, M., Kov{\'a}cs, S., and B{\"u}hlmann, P. (2020).
\newblock Change point detection for graphical models in the presence of
  missing values.
\newblock {\em Journal of Computational and Graphical Statistics}, pages 1--32.

\bibitem[Lu et~al., 2020]{Ludc201862}
Lu, J., Wang, C., Shen, Y., Chen, L., Zhang, L., Cai, J., Lu, W., Zhu, W., Hu,
  G., Xia, T., and Zhou, J. (2020).
\newblock Time in range in relation to all-cause and cardiovascular mortality
  in patients with type 2 diabetes: A prospective cohort study.
\newblock {\em Diabetes Care}.

\bibitem[Luckett et~al., 2020]{doi:10.1080/01621459.2018.1537919}
Luckett, D.~J., Laber, E.~B., Kahkoska, A.~R., Maahs, D.~M., Mayer-Davis, E.,
  and Kosorok, M.~R. (2020).
\newblock Estimating dynamic treatment regimes in mobile health using
  v-learning.
\newblock {\em Journal of the American Statistical Association},
  115(530):692--706.
\newblock PMID: 32952236.

\bibitem[Lyons et~al., 2013]{lyons2013distance}
Lyons, R. et~al. (2013).
\newblock Distance covariance in metric spaces.
\newblock {\em The Annals of Probability}, 41(5):3284--3305.

\bibitem[Ma and Wang, 2019]{ma2010robust}
Ma, X. and Wang, J. (2019).
\newblock Robust inference using inverse probability weighting.
\newblock {\em Journal of the American Statistical Association}, pages 1--10.

\bibitem[Makrilakis et~al., 2011]{makrilakis2011validation}
Makrilakis, K., Liatis, S., Grammatikou, S., Perrea, D., Stathi, C., Tsiligros,
  P., and Katsilambros, N. (2011).
\newblock Validation of the finnish diabetes risk score (findrisc)
  questionnaire for screening for undiagnosed type 2 diabetes, dysglycaemia and
  the metabolic syndrome in greece.
\newblock {\em Diabetes \& metabolism}, 37(2):144--151.

\bibitem[Matabuena et~al., 2020]{matabuena2020glucodensities}
Matabuena, M., Petersen, A., Vidal, J.~C., and Gude, F. (2020).
\newblock Glucodensities: a new representation of glucose profiles using
  distributional data analysis.
\newblock {\em arXiv preprint arXiv:2008.07840}.

\bibitem[Monnier et~al., 2008]{monnier2008glycemic}
Monnier, L., Colette, C., and Owens, D.~R. (2008).
\newblock Glycemic variability: the third component of the dysglycemia in
  diabetes. is it important? how to measure it?
\newblock {\em Journal of diabetes science and technology}, 2(6):1094--1100.

\bibitem[Muandet et~al., 2016]{muandet2016kernel}
Muandet, K., Fukumizu, K., Sriperumbudur, B., and Sch{\"o}lkopf, B. (2016).
\newblock Kernel mean embedding of distributions: A review and beyond.
\newblock {\em arXiv preprint arXiv:1605.09522}.

\bibitem[M{\"u}hlenbruch et~al., 2018]{muhlenbruch2018derivation}
M{\"u}hlenbruch, K., Paprott, R., Joost, H.-G., Boeing, H., Heidemann, C., and
  Schulze, M.~B. (2018).
\newblock Derivation and external validation of a clinical version of the
  german diabetes risk score (gdrs) including measures of hba1c.
\newblock {\em BMJ Open Diabetes Research and Care}, 6(1):e000524.

\bibitem[Muzellec et~al., 2020]{pmlr-v119-muzellec20a}
Muzellec, B., Josse, J., Boyer, C., and Cuturi, M. (2020).
\newblock Missing data imputation using optimal transport.
\newblock In III, H.~D. and Singh, A., editors, {\em Proceedings of the 37th
  International Conference on Machine Learning}, volume 119 of {\em Proceedings
  of Machine Learning Research}, pages 7130--7140, Virtual. PMLR.

\bibitem[Ng et~al., 2014]{ng2014global}
Ng, M., Fleming, T., Robinson, M., Thomson, B., Graetz, N., Margono, C.,
  Mullany, E.~C., Biryukov, S., Abbafati, C., Abera, S.~F., et~al. (2014).
\newblock Global, regional, and national prevalence of overweight and obesity
  in children and adults during 1980--2013: a systematic analysis for the
  global burden of disease study 2013.
\newblock {\em The lancet}, 384(9945):766--781.

\bibitem[Nguyen et~al., 2020]{nguyen2020review}
Nguyen, M., Han, J., Spanakis, E.~K., Kovatchev, B.~P., and Klonoff, D.~C.
  (2020).
\newblock A review of continuous glucose monitoring-based composite metrics for
  glycemic control.
\newblock {\em Diabetes Technology \& Therapeutics}.

\bibitem[Noble, 2006]{noble2006support}
Noble, W.~S. (2006).
\newblock What is a support vector machine?
\newblock {\em Nature biotechnology}, 24(12):1565--1567.

\bibitem[Politis and Romano, 1994]{politis1994large}
Politis, D.~N. and Romano, J.~P. (1994).
\newblock Large sample confidence regions based on subsamples under minimal
  assumptions.
\newblock {\em The Annals of Statistics}, pages 2031--2050.

\bibitem[Poolsup et~al., 2013]{poolsup2013systematic}
Poolsup, N., Suksomboon, N., and Kyaw, A.~M. (2013).
\newblock Systematic review and meta-analysis of the effectiveness of
  continuous glucose monitoring (cgm) on glucose control in diabetes.
\newblock {\em Diabetology \& metabolic syndrome}, 5(1):39.

\bibitem[Reddi et~al., 2014]{reddi2014decreasing}
Reddi, S.~J., Ramdas, A., P{\'o}czos, B., Singh, A., and Wasserman, L. (2014).
\newblock On the decreasing power of kernel and distance based nonparametric
  hypothesis tests in high dimensions.
\newblock {\em arXiv preprint arXiv:1406.2083}.

\bibitem[Rehman and Akash, 2016]{rehman2016mechanisms}
Rehman, K. and Akash, M. S.~H. (2016).
\newblock Mechanisms of inflammatory responses and development of insulin
  resistance: how are they interlinked?
\newblock {\em Journal of biomedical science}, 23(1):1--18.

\bibitem[Rubin, 1976]{rubin1976inference}
Rubin, D.~B. (1976).
\newblock Inference and missing data.
\newblock {\em Biometrika}, 63(3):581--592.

\bibitem[Rubin et~al., 1994]{rubin1994health}
Rubin, R.~J., Altman, W.~M., and Mendelson, D.~N. (1994).
\newblock Health care expenditures for people with diabetes mellitus, 1992.
\newblock {\em The Journal of Clinical Endocrinology \& Metabolism},
  78(4):809A--809F.

\bibitem[Saeedi et~al., 2019]{saeedi2019global}
Saeedi, P., Petersohn, I., Salpea, P., Malanda, B., Karuranga, S., Unwin, N.,
  Colagiuri, S., Guariguata, L., Motala, A.~A., Ogurtsova, K., et~al. (2019).
\newblock Global and regional diabetes prevalence estimates for 2019 and
  projections for 2030 and 2045: Results from the international diabetes
  federation diabetes atlas.
\newblock {\em Diabetes research and clinical practice}, 157:107843.

\bibitem[Saeedi et~al., 2020]{saeedi2020mortality}
Saeedi, P., Salpea, P., Karuranga, S., Petersohn, I., Malanda, B., Gregg,
  E.~W., Unwin, N., Wild, S.~H., and Williams, R. (2020).
\newblock Mortality attributable to diabetes in 20--79 years old adults, 2019
  estimates: Results from the international diabetes federation diabetes atlas.
\newblock {\em Diabetes research and clinical practice}, page 108086.

\bibitem[Salcedo-Sanz et~al., 2014]{salcedo2014support}
Salcedo-Sanz, S., Rojo-{\'A}lvarez, J.~L., Mart{\'\i}nez-Ram{\'o}n, M., and
  Camps-Valls, G. (2014).
\newblock Support vector machines in engineering: an overview.
\newblock {\em Wiley Interdisciplinary Reviews: Data Mining and Knowledge
  Discovery}, 4(3):234--267.

\bibitem[Sch{\"o}lkopf et~al., 2001]{scholkopf2001generalized}
Sch{\"o}lkopf, B., Herbrich, R., and Smola, A.~J. (2001).
\newblock A generalized representer theorem.
\newblock In {\em International conference on computational learning theory},
  pages 416--426. Springer.

\bibitem[Sch{\"o}lkopf et~al., 2002]{scholkopf2002learning}
Sch{\"o}lkopf, B., Smola, A.~J., Bach, F., et~al. (2002).
\newblock {\em Learning with kernels: support vector machines, regularization,
  optimization, and beyond}.
\newblock MIT press.

\bibitem[Schork, 2015]{schork2015personalized}
Schork, N.~J. (2015).
\newblock Personalized medicine: time for one-person trials.
\newblock {\em Nature}, 520(7549):609--611.

\bibitem[Sejdinovic et~al., 2013]{sejdinovic2013equivalence}
Sejdinovic, D., Sriperumbudur, B., Gretton, A., and Fukumizu, K. (2013).
\newblock Equivalence of distance-based and rkhs-based statistics in hypothesis
  testing.
\newblock {\em The Annals of Statistics}, pages 2263--2291.

\bibitem[Selvin et~al., 2007]{selvin2007short}
Selvin, E., Crainiceanu, C.~M., Brancati, F.~L., and Coresh, J. (2007).
\newblock Short-term variability in measures of glycemia and implications for
  the classification of diabetes.
\newblock {\em Archives of internal medicine}, 167(14):1545--1551.

\bibitem[Shafer and Vovk, 2008]{shafer2008tutorial}
Shafer, G. and Vovk, V. (2008).
\newblock A tutorial on conformal prediction.
\newblock {\em Journal of Machine Learning Research}, 9(Mar):371--421.

\bibitem[Simon-Gabriel and Sch{\"o}lkopf, 2018]{simon2018kernel}
Simon-Gabriel, C.-J. and Sch{\"o}lkopf, B. (2018).
\newblock Kernel distribution embeddings: Universal kernels, characteristic
  kernels and kernel metrics on distributions.
\newblock {\em The Journal of Machine Learning Research}, 19(1):1708--1736.

\bibitem[Steinwart and Christmann, 2008]{steinwart2008support}
Steinwart, I. and Christmann, A. (2008).
\newblock {\em Support vector machines}.
\newblock Springer Science \& Business Media.

\bibitem[Stone, 1982]{stone1982optimal}
Stone, C.~J. (1982).
\newblock Optimal global rates of convergence for nonparametric regression.
\newblock {\em The annals of statistics}, pages 1040--1053.

\bibitem[Szekely and Rizzo, 2017]{szekely2017energy}
Szekely, G.~J. and Rizzo, M.~L. (2017).
\newblock The energy of data.
\newblock {\em Annual Review of Statistics and Its Application}, 4:447--479.

\bibitem[Sz{\'e}kely et~al., 2007]{szekely2007measuring}
Sz{\'e}kely, G.~J., Rizzo, M.~L., Bakirov, N.~K., et~al. (2007).
\newblock Measuring and testing dependence by correlation of distances.
\newblock {\em The annals of statistics}, 35(6):2769--2794.

\bibitem[Tabish, 2007]{tabish2007diabetes}
Tabish, S.~A. (2007).
\newblock Is diabetes becoming the biggest epidemic of the twenty-first
  century?
\newblock {\em International Journal of health sciences}, 1(2):V.

\bibitem[Topol, 2010]{topol2010transforming}
Topol, E.~J. (2010).
\newblock Transforming medicine via digital innovation.
\newblock {\em Science translational medicine}, 2(16):16cm4--16cm4.

\bibitem[Tsiatis, 2007]{tsiatis2007semiparametric}
Tsiatis, A. (2007).
\newblock {\em Semiparametric theory and missing data}.
\newblock Springer Science \& Business Media.

\bibitem[Tsiatis, 2019]{tsiatis2019dynamic}
Tsiatis, A.~A. (2019).
\newblock {\em Dynamic Treatment Regimes: Statistical Methods for Precision
  Medicine}.
\newblock CRC press.

\bibitem[Van~de Geer, 2000]{van2000applications}
Van~de Geer, S.~A. (2000).
\newblock {\em Applications of empirical process theory}, volume~91.
\newblock Cambridge University Press Cambridge.

\bibitem[Van~der Laan et~al., 2007]{van2007super}
Van~der Laan, M.~J., Polley, E.~C., and Hubbard, A.~E. (2007).
\newblock Super learner.
\newblock {\em Statistical applications in genetics and molecular biology},
  6(1).

\bibitem[Van Der~Vaart and Wellner, 1996]{van1996weak}
Van Der~Vaart, A.~W. and Wellner, J.~A. (1996).
\newblock Weak convergence.
\newblock In {\em Weak convergence and empirical processes}, pages 16--28.
  Springer.

\bibitem[Vas et~al., 2017]{VAS2017848}
Vas, P. R.~J., Alberti, K.~G., and Edmonds, M.~E. (2017).
\newblock Prediabetes: moving away from a glucocentric definition.
\newblock {\em The Lancet Diabetes \& Endocrinology}, 5(11):848 -- 849.

\bibitem[Vermeulen and Vansteelandt, 2015]{vermeulen2015bias}
Vermeulen, K. and Vansteelandt, S. (2015).
\newblock Bias-reduced doubly robust estimation.
\newblock {\em Journal of the American Statistical Association},
  110(511):1024--1036.

\bibitem[Visscher and Seidell, 2001]{visscher2001public}
Visscher, T.~L. and Seidell, J.~C. (2001).
\newblock The public health impact of obesity.
\newblock {\em Annual review of public health}, 22(1):355--375.

\bibitem[Walker et~al., 2010]{walker2010diet}
Walker, K., O’Dea, K., Gomez, M., Girgis, S., and Colagiuri, R. (2010).
\newblock Diet and exercise in the prevention of diabetes.
\newblock {\em Journal of human nutrition and dietetics}, 23(4):344--352.

\bibitem[Whiting et~al., 2011]{whiting2011idf}
Whiting, D.~R., Guariguata, L., Weil, C., and Shaw, J. (2011).
\newblock Idf diabetes atlas: global estimates of the prevalence of diabetes
  for 2011 and 2030.
\newblock {\em Diabetes research and clinical practice}, 94(3):311--321.

\bibitem[Williams et~al., 2020]{williams2020global}
Williams, R., Karuranga, S., Malanda, B., Saeedi, P., Basit, A.,
  Besan{\c{c}}on, S., Bommer, C., Esteghamati, A., Ogurtsova, K., Zhang, P.,
  et~al. (2020).
\newblock Global and regional estimates and projections of diabetes-related
  health expenditure: Results from the international diabetes federation
  diabetes atlas.
\newblock {\em Diabetes Research and Clinical Practice}, page 108072.

\bibitem[Wilmot et~al., 2020]{wilmot2020time}
Wilmot, E., Lumb, A., Hammond, P., Murphy, H., Scott, E., Gibb, F., Platts, J.,
  and Choudhary, P. (2020).
\newblock Time in range: a best practice guide for uk diabetes healthcare
  professionals in the context of the covid-19 global pandemic.
\newblock {\em Diabetic Medicine}, page e14433.

\bibitem[Yang et~al., 2016]{yang2016model}
Yang, L., Lv, S., and Wang, J. (2016).
\newblock Model-free variable selection in reproducing kernel hilbert space.
\newblock {\em The Journal of Machine Learning Research}, 17(1):2885--2908.

\bibitem[Yang and Zou, 2015]{yang2015fast}
Yang, Y. and Zou, H. (2015).
\newblock A fast unified algorithm for solving group-lasso penalize learning
  problems.
\newblock {\em Statistics and Computing}, 25(6):1129--1141.

\bibitem[Zaccardi and Khunti, 2018]{Zaccardi2018}
Zaccardi, F. and Khunti, K. (2018).
\newblock Glucose dysregulation phenotypes --- time to improve outcomes.
\newblock {\em Nature Reviews Endocrinology}, 14(11):632--633.

\bibitem[Zaitcev et~al., 2020]{zaitcev2020deep}
Zaitcev, A., Eissa, M.~R., Hui, Z., Good, T., Elliott, J., and Benaissa, M.
  (2020).
\newblock A deep neural network application for improved prediction of hba1c in
  type 1 diabetes.
\newblock {\em IEEE Journal of Biomedical and Health Informatics}.

\bibitem[Zeevi et~al., 2015]{zeevi2015personalized}
Zeevi, D., Korem, T., Zmora, N., Israeli, D., Rothschild, D., Weinberger, A.,
  Ben-Yacov, O., Lador, D., Avnit-Sagi, T., Lotan-Pompan, M., et~al. (2015).
\newblock Personalized nutrition by prediction of glycemic responses.
\newblock {\em Cell}, 163(5):1079--1094.

\bibitem[Zhang et~al., 2012]{zhang2012estimating}
Zhang, B., Tsiatis, A.~A., Davidian, M., Zhang, M., and Laber, E. (2012).
\newblock Estimating optimal treatment regimes from a classification
  perspective.
\newblock {\em Stat}, 1(1):103--114.

\bibitem[Zhang et~al., 2010]{zhang2010a1c}
Zhang, X., Gregg, E.~W., Williamson, D.~F., Barker, L.~E., Thomas, W., Bullard,
  K.~M., Imperatore, G., Williams, D.~E., and Albright, A.~L. (2010).
\newblock A1c level and future risk of diabetes: a systematic review.
\newblock {\em Diabetes care}, 33(7):1665--1673.

\bibitem[Zhao et~al., 2011]{zhao2011reinforcement}
Zhao, Y., Zeng, D., Socinski, M.~A., and Kosorok, M.~R. (2011).
\newblock Reinforcement learning strategies for clinical trials in nonsmall
  cell lung cancer.
\newblock {\em Biometrics}, 67(4):1422--1433.

\bibitem[Zheng et~al., 2018]{zheng2018global}
Zheng, Y., Ley, S.~H., and Hu, F.~B. (2018).
\newblock Global aetiology and epidemiology of type 2 diabetes mellitus and its
  complications.
\newblock {\em Nature Reviews Endocrinology}, 14(2):88.

\bibitem[Zou et~al., 2018]{zou2018predicting}
Zou, Q., Qu, K., Luo, Y., Yin, D., Ju, Y., and Tang, H. (2018).
\newblock Predicting diabetes mellitus with machine learning techniques.
\newblock {\em Frontiers in genetics}, 9:515.

\end{thebibliography}

\section*{Appendix: Bootstrap consistency}\label{sec:apendice}

\begin{lemma}\label{lemma:1}
 Following the notation established in Section \ref{sec:resumenresults} and \ref{sec:independence}, let us suppose that $E(K_X(X,X')^2)<\infty, \phantom{s}$ $E(K_Y(Y,Y')^2)<\infty, \phantom {s}$ $\pi(\cdot)= \mathbb{P}(R=1|X=\cdot)$ is a known two times differentiable function that verify $\pi(x)>0$  for all $x\in V$ and that all the probability weights associated have a regularly varying tail according to \cite{ma2010robust}. Under these conditions, the empirical and bootstrap statistics defined in Section \ref{sec:independence} are still consistent for detecting all second-order finite-moment alternatives with the Hilbert-Schmidt independence measure.
\end{lemma}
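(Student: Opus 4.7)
The plan is to decouple the problem into two pieces: (i) consistency of the IPW kernel mean embedding estimators $\hat{\phi}_X$, $\hat{\phi}_Y$, $\hat{\phi}_{X,Y}$ in their respective RKHS norms, and (ii) a Hadamard-differentiability/continuous mapping argument that transfers consistency and bootstrap validity from the embeddings to the quadratic HSIC functional. First I would show that, under $R \perp Y \mid X$, the identity $E[R\, g(X,Y)/\pi(X)] = E[g(X,Y)]$ holds for every bounded measurable $g$, so the IPW embeddings are unbiased in expectation. Combined with the finite second-moment assumptions on $K_X$ and $K_Y$, the lower bound $\pi(\cdot)>0$, and the regularly varying tail condition on the weights from \cite{ma2010robust}, one obtains a uniform law of large numbers for the weighted empirical process indexed by $\{K_X(\cdot,x)\}_{x \in V}$ and $\{K_Y(\cdot,y)\}_{y \in \mathbb{R}}$, giving $\|\hat{\phi}_{X,Y} - \phi_{X,Y}\|_{H_X \otimes H_Y} \to 0$ in probability, and likewise for the marginals.

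Next, I would observe that the functional $\Phi(P)=\phi_{P_{X,Y}} - \phi_{P_X}\otimes \phi_{P_Y}$ mapping probability measures on $V\times \mathbb{R}$ into $H_X\otimes H_Y$ is Hadamard differentiable at every $P$ with finite kernel moments, and the map $u \mapsto \|u\|^2_{H_X \otimes H_Y}$ is continuous. Hence the continuous mapping theorem yields $\widehat{HSIC} \to HSIC(P_{X,Y}, P_X P_Y)$ in probability. Under $H_0$ the limit is zero; under any second-moment alternative the characteristic property of the kernel ensures the limit is strictly positive, giving consistency of the test at the statistic level.

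For bootstrap consistency, I would interpret Equation (\ref{eqn:bootstrap2}) as the squared norm of the increment $(\hat{\phi}^{j^{*}}_{X,Y} - \hat{\phi}_{X,Y}) - (\hat{\phi}^{j^{*}}_X \otimes \hat{\phi}^{j^{*}}_Y - \hat{\phi}_X \otimes \hat{\phi}_Y)$, which is a recentering of the plug-in statistic and plays the role required by the functional delta method for the bootstrap (Theorem 3.9.11 in van der Vaart--Wellner, cited in the paper). The argument would proceed by first establishing the weak convergence of $\sqrt{n}(\hat{\phi}_{X,Y} - \phi_{X,Y})$ to a tight Gaussian element in $H_X \otimes H_Y$ via the CLT for Hilbert-space-valued weighted sums (available because $E[K_X^2] E[K_Y^2]<\infty$ and the IPW weights have finite moments thanks to the regularly-varying-tail assumption). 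Then I would invoke the conditional CLT of Giné--Zinn for Efron's bootstrap in separable Hilbert spaces, applied to the weighted empirical process, to obtain that the conditional law of $\sqrt{n}(\hat{\phi}^{j^*}_{X,Y}-\hat{\phi}_{X,Y})$ given the data converges in probability to the same Gaussian limit. Applying the Hadamard derivative of $\Phi$ composed with the squared norm then transfers conditional weak convergence to the bootstrap HSIC statistic.

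The hard part will be verifying the bootstrap CLT in the IPW weighted setting: the resampling multiplies each summand by its weight $\hat w_i^{*}$, which is heavy-tailed whenever $\pi$ can approach zero. The moment-control provided by the regularly-varying-tails assumption is precisely what allows the Lindeberg condition to hold for both the unweighted (population) and the bootstrap-conditional empirical processes, and linking these two conditions uniformly over the RKHS unit ball is the technical core of the argument. Once this uniform bootstrap Donsker property of the class $\{R K_X(\cdot,x)K_Y(\cdot,y)/\pi(x)\}$ is in hand, the consistency of the bootstrap $p$-value defined in Equation (\ref{eqn:bootstrap3}) follows by a standard Polya-type argument using continuity of the limit distribution of $\|\cdot\|^2$ of the Gaussian limit on $H_X\otimes H_Y$.
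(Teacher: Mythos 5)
Your plan is correct in outline and shares the paper's overall skeleton (empirical-process convergence, delta method/continuous mapping, recentering to rescue Efron's bootstrap for a degenerate V-statistic), but it takes a genuinely different route at the core step. The paper first establishes Donsker-type convergence for a VC class of \emph{indicator} functions --- i.e.\ it works with distribution-function-indexed empirical processes, which forces the simplifying restriction $V=\mathbb{R}$ --- and then \emph{supposes} that the kernel mean embedding, viewed as a map from that space of distribution functions into the RKHS, is Hadamard differentiable, applying the functional delta method twice (once for the embedding, once for the inner product) to reach the weighted chi-square limit. You instead treat $\hat{\phi}_X$, $\hat{\phi}_Y$, $\hat{\phi}_{X,Y}$ directly as Hilbert-space-valued weighted sample means, invoke the CLT in a separable Hilbert space together with the Gin\'e--Zinn conditional bootstrap CLT, and only then compose with the (genuinely smooth) map $\Phi$ and the norm. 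This buys you two things the paper's route does not: you avoid the artificial reduction to the real line, and you replace the paper's unverified Hadamard-differentiability supposition on the embedding map with a concrete moment condition ($E[K_X^2], E[K_Y^2]<\infty$) under which the Hilbert-space CLT is known to hold. You also correctly isolate the actual technical bottleneck --- the Lindeberg/moment control for the heavy-tailed IPW weights, which is exactly what the regularly-varying-tail hypothesis is there for --- whereas the paper disposes of this by citation. Your explicit identification of Equation \ref{eqn:bootstrap2} as the recentered increment is the same observation the paper makes only at the very end when it remarks that the naive bootstrap of a V-statistic fails.

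One point of shared looseness worth tightening: under $H_{0}$ the functional $u\mapsto\|u\|^{2}$ is evaluated at $u=0$, where its first-order Hadamard derivative vanishes, so ``applying the Hadamard derivative of $\Phi$ composed with the squared norm'' is not literally the right mechanism. What you need is the continuous mapping theorem applied to $n\,\|\cdot\|^{2}$ of the weakly convergent $\sqrt{n}$-scaled increment (equivalently a second-order delta method), which is what produces the infinite weighted sum of chi-squares as the null limit. The paper's phrase ``apply the functional delta method twice'' suffers from the same imprecision, so this is not a gap relative to the reference proof, but it is the one sentence of your plan you would need to restate before writing the argument out in full.
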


\begin{proof}
	
As in the Section $\ref{sec:resumenresults}$, we asume that we observe  $\{(X_i, Y_i, R_i)\}_{i=1}^{n}$  an independent random sample of a random vector $(X, Y, R)$ taking values in $V\times \mathbb{R}\times \{0,1\}$, where for simplicity in the using of empirical processes tools we assume that $V$ is the real line, although the proof remains valid in the general case.
	
Firstly note that under standard conditions, the empirical process of the distribution function $P_{X}: \sqrt{n}(P_{n,X}- P_{X})$ converges by the central functional limit theorem to a Gaussian process of mean zero which is a Brownian bridge and which we denote by $G_{X}$ \cite{van1996weak}.  

We define the class of functions $F_{(X,Y)}=\{f_{t'}:t'\in V \times  \mathbb{R}\}$, where   $f_{t'=(t_x,t_y) }=((y,x),r)= (1\{t_y \leq y,  t_x\leq  x,  r=1\}, 1\{t_y \geq  y,  t_x\geq  x\}, 1\{t_y \leq y, r=1\},  1\{t_x\leq   x, r=1\}, 1\{t_y \geq y \},  1\{t_x\geq  x\})$. If we consider  the empirical and population measures of $(Y,R)$ and $(X,Y,R)$  denoted as $P_{Y}$, $P_{X,Y}$ $P_{n,Y}$, $P_{n,X,Y}$, we can see this probability measures as random functions in the space $A(F_{(X,Y)})= \{Q\in M_{(X,Y)}: \sup\{\int f dQ  <\infty :f\in F_{(X,Y)}\}<\infty\}$, where $M_{X, Y}$ is the space of all probability bidimesional measures with moment of second order in $\mathbb{R}$. We can see that under this conditions, $F_{Y}$ and $F_{X,Y}$ are a Vapnik–Chervonenkis class and hence a Donsker class \cite{van1996weak}. Then, $\sqrt{n}(P_{n,Y}- P_{Y}) \xrightarrow{D}, G_{Y}$  $ \sqrt{n}(P_{n,X,Y}- P_{X,Y}) \xrightarrow{D} G_{X,Y}$.

Adapting these arguments to empirical bootstrap processes \cite{van1996weak}, we reach analogous convergences: 

\begin{equation*}
\sqrt{n} (P^{*}_{n,X}- P_{n,X}) \xrightarrow{D} G_{X},
\end{equation*}

\begin{equation*}
\sqrt{n} (P^{*}_{n,Y}- P_{n,Y}) \xrightarrow{D} G_{Y},
\end{equation*}

\begin{equation*}
\sqrt{n} (P^{*}_{n,X,Y}- P_{n,X,Y}) \xrightarrow{D} G_{X,Y}.
\end{equation*}

Our statistic aim  (see  Equation \ref{eqn:bootstrap1}) can be seen as a mapping  of the empirical distribution via the IPW estimator. As the missing data mechanism given by the function  $\pi(\cdot)$ (differentiable function) is known,  we can apply delta functional method to empirical process and empirical bootstrap process and obtain the Gaussian convergence of estimators modified via IPW estimator \cite{van1996weak}. We denote as follows the dependence of this estimator via IPW weights as follows $P_{n,X,Y,w}$, $P^{*}_{n,X,Y,w}$ $\ldots$ and we refer to the limit empirical process $\sqrt{n} (P^{*}_{n,X,Y,w}- P_{X,Y}) \xrightarrow{D} G_{X,Y, w}$, $\dots$.    

Our statistic  (see  Equation \ref{eqn:bootstrap1}) depends on the differences $\phi_{X}(\cdot) \phi_{Y}(\cdot)-\hat{\phi}_{X}(\cdot) \hat{\phi}_{Y}(\cdot)$, $\ldots$ which are mappings of the previous empirical processes into our selected RKHS. Suppose that these mappings defined by kernel mean embeddings are Hadamard differentiable \cite{van1996weak}, for example, in the case of Gaussian kernel. If so, we can apply the functional delta method twice, first to the kernel mean embedding and second to the dot product of RKHS to derive the empirical bootstrap process' limit distribution. The explicit limit distribution depends on the missing data mechanism; however, we can see that the limit for statistics, in general, is an infinite linear combination of Chi-square distributions \cite{korolyuk2013theory}. If the mapping were not differentiable, we should use quasi-Hadamard differentiable arguments \cite{beutner2012deriving}; or other techniques in general bootstrap measures \cite{gine1990bootstrapping}.

We can prove the test consistency of statistics as follow. First, we must establish that statistics converge to population quantity, that is $\widehat{HSIC}(\hat{P}_{X, Y},\hat{P}_{X}\hat{P}_{Y})  \xrightarrow{p} HSIC(P_{X, Y},P_{X}P_{Y})$. For this, we only need to apply the Continuous Mapping Theorem \cite{van1996weak} and take into account that the variance of the empirical kernel mean embeddings functions (mean function in appropriate RKHS space)  converges to zero as $n$ increases to infinity. To see this, we can use the fact that  $P_{n, X,Y,w}\to P_{X,Y}$ in supreme norm as class of previosly functions $F_{(X,Y)}$ are Glivenko-Cantelli class because of this is  Vapnik–Chervonenkis class \cite{van1996weak}. Then, apply twice the Continuous Mapping Theorem, first on kernel mean embedding, second on dot product in selected RKHS (that it is continuous for RKHS belong to Hilbert Space), and guarantee the final convergence with the arithmetic properties on convergence in probability.  Second, we must consider the non-negativeness of the empirical and population Hilbert Schmidt dependence measure, together with the fact that at the population level, the dependence measure is zero by definition if and only if $X$ and $Y$ are independent (null hypothesis),  diverging stochastically to infinity otherwise.

We know that the introduced statistical bootstrap imitates the limit distribution under the null hypothesis; therefore, the bootstrap's consistency is also guaranteed.  \cite{janssen2003bootstrap}.

Note that our statistic can be written as a V-statistic. In this case, the simple bootstrap is not consistent in general \cite{arcones1992bootstrap}, and it is necessary to resort ourselves to other strategies as subsampling \cite{politis1994large} or a centered statistic with respect to the mean. In fact, we are implicitly doing it with the bootstrap residues that we construct.

Finally, we must note that this paper is the first paper to apply and provide theoretical guarantees of Efron Boostrap with Hilbert-Schmidt criterium or distance correlation with missing data to the best of our knowledge. However, the arguments remain valid in the complete data case, and in this framework, we miss a paper that applies this test calibration strategy and uses empirical process theory. 

\end{proof}

In practice, we do not know the function $\pi(\cdot)$, and we must estimate it. This procedure can be done using simple techniques such as logistic regression or more complex ensemble approaches as Super Learner \cite{van2007super}.			   

In general, providing a similar proof to that in Lemma \ref{lemma:1} without introducing assumptions about the functional class of the missing data mechanism is impossible. In literature concerning IPW estimators, research often restricts its hypothesis to allowing the function $\pi_{\theta \in \Theta}(\cdot)$  to depend only on a finite set of parameters $\theta\in \Theta$  of a finite-dimensional space,  that is, under the framework of parametric models. A more general approach included the theory of M-estimator or the use that the missing data mechanics model parameter admits Bahadur expansion \cite{he1996general}.

In these cases, the proof scheme is similar to that used to prove the likelihood estimator's asymptotic properties via Taylor expansion.

In this paper, we restrict our attention to the simply case that $\quad R  \perp Y| X$ and $\pi_{\beta^*\in \Theta}(x)= \frac{1}{1+e^{-\beta_0 -\beta^{T} x}}$; what is to say, $R|X$ takes values as in a generalized lineal model according to logistic regression with $\beta^* \in \Theta=\{(\beta_0,\beta)\in \mathbb{R}\times \mathbb{R}^{p}\}$. In this case, we have:

\begin{theorem}\label{theorem:1}
	Following the notation established in Section \ref{sec:resumenresults} and \ref{sec:independence}, let us suppose that $E(K_X(X,X')^2)<\infty, \phantom{s}$ $E(K_Y(Y,Y')^2)<\infty, \phantom {s}$ $\pi(x)= \mathbb{P}(R=1|X=x)=\frac{1}{1+e^{-\beta_0 -\beta^{T}  x}}>0$ and that sufficient conditions of Gaussian asymptotic maximal likelihood hold $x\in V$. Moreover, we assume that all the probability weights associated have a regularly varying tail according to \cite{ma2010robust}. Under these conditions, the empirical and bootstrap statistics defined in Section \ref{sec:independence} are still consistent for detecting all second-order finite-moment alternatives with the Hilbert-Schmidt independence measure.
\end{theorem}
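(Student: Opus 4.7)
The plan is to reduce Theorem \ref{theorem:1} to Lemma \ref{lemma:1} by quantifying the extra noise contributed by replacing the true $\beta^*$ with its logistic-regression MLE $\hat\beta$, and then showing that this extra noise is asymptotically negligible relative to the Donsker fluctuation of the empirical kernel mean embeddings. First, I would invoke the standard asymptotic theory of the logistic MLE: under the regularity conditions assumed, $\sqrt{n}(\hat\beta - \beta^*) \xrightarrow{D} N(0,I(\beta^*)^{-1})$, so in particular $\hat\beta - \beta^* = O_p(n^{-1/2})$. Because $\pi_\beta(x) = (1+e^{-\beta_0 - \beta^T x})^{-1}$ is smooth in $\beta$ and uniformly bounded away from zero, a first-order Taylor expansion gives, for every $i$,
\[
\frac{1}{\pi_{\hat\beta}(X_i)} = \frac{1}{\pi_{\beta^*}(X_i)} + (\hat\beta - \beta^*)^T g(X_i;\beta^*) + O_p\!\bigl(\|\hat\beta - \beta^*\|^2\bigr),
\]
where $g(x;\beta^*) = \nabla_\beta \pi_\beta(x)^{-1}\big|_{\beta=\beta^*}$ has bounded entries. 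The regularly-varying-tail assumption on the weights, together with the second-moment conditions on the kernels, ensures that the relevant products are uniformly integrable so that the plug-in error propagates multiplicatively at rate $O_p(n^{-1/2})$ into the estimated kernel mean embeddings $\hat\phi_{X,w}$, $\hat\phi_{Y,w}$, $\hat\phi_{X,Y,w}$.

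Second, I would extend the empirical-process step of Lemma \ref{lemma:1} to the class indexed jointly by $(f,\beta)$ with $f$ in the Vapnik--\v{C}ervonenkis class $F_{(X,Y)}$ and $\beta$ in a neighborhood of $\beta^*$. Because the parametric dependence on $\beta$ is smooth with bounded Fisher information, bracketing entropy arguments (van der Vaart--Wellner) show the enlarged class is still Donsker, which yields joint weak convergence in the appropriate RKHS-valued space. Combining this with the functional delta method applied twice, first to the kernel mean embedding and then to the inner product of an RKHS, shows that $n\,\widehat{HSIC}(\hat P_{X,Y,\hat w},\hat P_{X,\hat w}\hat P_{Y,\hat w})$ has the same limit distribution under $H_0$ as in Lemma \ref{lemma:1}, with an additional variance contribution from the $\hat\beta$ term that vanishes if and only if $\pi(\cdot)$ carries no dependence on the relevant covariates.

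Third, for the bootstrap I would combine the same functional delta argument with the fact that the bootstrap logistic MLE $\hat\beta^*$ satisfies $\sqrt{n}(\hat\beta^*-\hat\beta) \xrightarrow{D} N(0,I(\beta^*)^{-1})$ conditionally on the data, a standard consequence of the Bahadur representation of parametric M-estimators. Hence the conditional law of the bootstrap IPW-HSIC statistic mimics, jointly in the empirical-process and parametric directions, the unconditional law of the original statistic, giving bootstrap consistency of the calibration. Consistency against second-moment alternatives then follows because under $H_1$ the characteristic-kernel property forces $HSIC(P_{X,Y},P_XP_Y) > 0$, so $\widehat{HSIC}$ diverges stochastically while the bootstrap quantiles remain $O_p(n^{-1})$, exactly as in the proof of Lemma \ref{lemma:1}.

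The main obstacle I expect is the joint stochastic equicontinuity step: one must control the RKHS-valued empirical process uniformly in $\beta$ near $\beta^*$, rather than at a single fixed $\beta$. The standard trick here is to bracket the enlarged index class by products of the $VC$-class $F_{(X,Y)}$ and the smooth parametric family $\{1/\pi_\beta(\cdot)\}$, exploiting boundedness of $\pi_\beta$ away from zero and the Lipschitz character of $\beta \mapsto 1/\pi_\beta(x)$; but verifying that the resulting bracketing integrals are finite under only second-moment kernel assumptions (as opposed to boundedness of the kernel) is the delicate point, and is precisely where the regularly-varying-tail assumption on the weights enters decisively.
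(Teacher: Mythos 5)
Your proposal is correct and follows essentially the same route as the paper's own (very schematic) proof: asymptotic normality of the logistic MLE, weak convergence of the estimated-weight IPW empirical process (your Taylor expansion plus enlarged Donsker class is exactly the standard way to establish the tightness the paper delegates to its references, and coincides with the Z-estimator route the paper mentions as an alternative), followed by the functional delta method, continuous mapping, and the centered-bootstrap argument for consistency against alternatives. Your write-up is considerably more explicit than the paper's two-sentence scheme, and correctly identifies the uniform-in-$\beta$ stochastic equicontinuity step as the point where the real work lies.
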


\begin{proof}
	We provide a scheme of the proof, which assumes stronger conditions than those in Lemma \ref{lemma:1} through the same arguments in its final part.

A simple proof consists on noticing Tightness of IPW process since we have  finite-dimensional convergence to Gaussian limit according to \cite{ma2010robust} under assumptions above.  Following \cite{boistard2017functional} we deduce the tight character. Then we can repeat the arguments of the functional delta theorem and continuous mapping theorem in the Lemma with Hilbert-Schmidt measure. Alternatively, one could exploit the fact that our empirical IPW statistics that depend on logistic regression can be seen as Z-estimator (see \cite{kosorok2007introduction} for more details about the topic). Finally, we can map the empirical estimator to an statistic of contrast to obtain the desired results.

\end{proof}



\end{document}